\newcommand{\x}[1]{x_\mathrm{#1}}       
\newcommand{\Frechet}{Fr\'echet }        
\newcommand{\F}{\mathcal{F}}            
\newcommand{\conv}{\mathrm{conv}}
\DeclareMathOperator*{\len}{len}                
\DeclareMathOperator*{\argmin}{arg\,min}
\DeclareMathOperator*{\closure}{cl}
\DeclareMathOperator*{\interior}{int}
\renewcommand{\emptyset}{\text{\O}}             
\renewcommand{\epsilon}{\varepsilon}
		\SetMathAlphabet{\mathrm}{bold}{T1}{times}{b}{n}
		\SetMathAlphabet{\mathit}{bold}{T1}{times}{b}{it}
		\SetMathAlphabet{\mathbf}{bold}{T1}{times}{b}{n}
		\SetMathAlphabet{\mathtt}{bold}{OT1}{pcr}{b}{n}
		\renewcommand\boldmath{\@nomath\boldmath\mathversion{bold}}
\def\BibTeX{{\rm B\kern-.05em{\sc i\kern-.025em b}\kern-.08em
		T\kern-.1667em\lower.7ex\hbox{E}\kern-.125emX}}
\theoremstyle{definition}
\newtheorem{definition}{Definition}
\newtheorem*{definition*}{Definition}
\newtheorem{proposition}{Proposition}
\newtheorem{example}{Example}
\newtheorem{lemma}{Lemma}
\newtheorem{construction}{Construction}
\newenvironment{Tabular}[2][1]
{\def\arraystretch{#1}\tabular{#2}}
{\endtabular}
\newcommand\TstrutSmall{\rule{0pt}{2ex}}            
\newcommand\BstrutSmall{\rule[-1.5ex]{0pt}{0pt}}    
\begin{document}

\history{Date of publication xxxx 00, 0000, date of current version xxxx 00, 0000.}

\doi{00.0000/ACCESS.2024.0000000}

\title{Entanglement Definitions for Tethered Robots: Exploration and Analysis}

\author{
	\uppercase{Gianpietro Battocletti}\textsuperscript{\href{https://orcid.org/0009-0004-6981-0017}{\hspace{0.05cm}\includegraphics[scale=0.012]{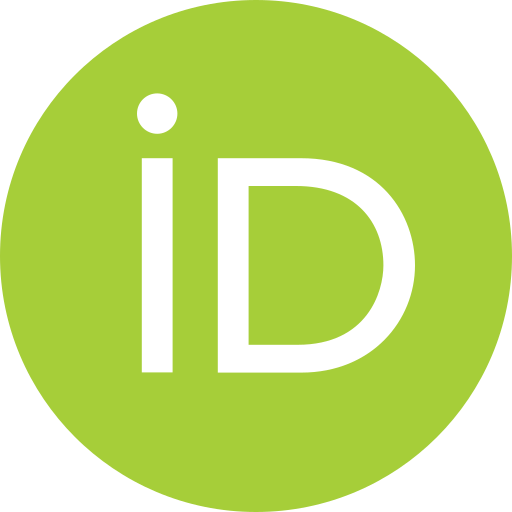}}\hspace{0.05cm}}\authorrefmark{1},
	\uppercase{Dimitris Boskos}\textsuperscript{\href{https://orcid.org/0000-0003-0287-9197}{\hspace{0.05cm}\includegraphics[scale=0.012]{orcid_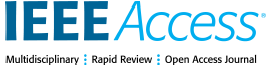}}\hspace{0.05cm}}\authorrefmark{1},
	\uppercase{Domagoj Toli\'{c}}\textsuperscript{\href{https://orcid.org/0000-0002-0988-889X}{\hspace{0.05cm}\includegraphics[scale=0.012]{orcid_logo.png}}\hspace{0.05cm}}\authorrefmark{2,3},
	\uppercase{Ivana Palunko}\textsuperscript{\href{https://orcid.org/0000-0001-5204-2294}{\hspace{0.05cm}\includegraphics[scale=0.012]{orcid_logo.png}}\hspace{0.05cm}}\authorrefmark{2}, 
	and \uppercase{Bart De Schutter}\textsuperscript{\href{https://orcid.org/0000-0001-9867-6196}{\hspace{0.05cm}\includegraphics[scale=0.012]{orcid_logo.png}}\hspace{0.05cm}}\authorrefmark{1}, \IEEEmembership{Fellow,~IEEE}
}

\address[1]{Delft Center for Systems and Control, Delft University of Technology, 2628 CN Delft, The Netherlands.}
\address[2]{LARIAT – Laboratory for Intelligent Autonomous Systems, University of Dubrovnik, Croatia.}
\address[3]{RIT Croatia, Dubrovnik, Croatia.}
	
\corresp{Corresponding author: Gianpietro Battocletti (e-mail: g.battocletti@tudelft.nl).}

\tfootnote{This publication has been supported by funding from the European Union’s Horizon 2020 Research and Innovation Programme under grant agreement No 871295 (SeaClear) and by funding from the European Union's Horizon Europe Programme under grant agreement No 101093822 (SeaClear 2.0).}

\markboth
{G. Battocletti \headeretal: Entanglement Definitions for Tethered Robots: Exploration and Analysis}
{G. Battocletti \headeretal: Entanglement Definitions for Tethered Robots: Exploration and Analysis}

\begin{abstract}
	In this article we consider the problem of tether entanglement for tethered mobile robots. 
	One of the main risks of using a tethered connection between a mobile robot and an anchor point is that the tether may get entangled with the obstacles present in the environment or with itself.
	To avoid these situations, a non-entanglement constraint can be considered in the motion planning problem for tethered robots. 
	This constraint is typically expressed as a set of specific tether configurations that must be avoided.
	However, the literature lacks a generally accepted definition of entanglement, with existing definitions being limited and partial in the sense that they only focus on specific instances of entanglement.
	In practice, this means that the existing definitions do not effectively cover all instances of tether entanglement.
	Our goal in this article is to bridge this gap and to provide new definitions of entanglement, which, together with the existing ones, can be effectively used to qualify the entanglement state of a tethered robot in diverse situations.
	The new definitions find application in motion planning for tethered robots, where they can be used to obtain more safe and robust entanglement-free trajectories. 
\end{abstract}

\begin{keywords}
	Tethered mobile robots, tether entanglement, entanglement avoidance, motion planning.
\end{keywords}

\titlepgskip=-21pt

\maketitle

\section{Introduction}
\label{sec:introduction}
\PARstart{T}{ethered} mobile robots are a class of mobile robots characterized by a cabled connection of the robot with an anchor point, or with another robot \cite{yuh2000design}. 
Tethered mobile robots (from now on referred to as ‘tethered robots’ for brevity) find application in a large number of tasks, such as exploration, inspection, or maintenance, and they are employed in ground \cite{mcgarey2017tslam, krishna1997tethering, mcgarey2018developing}, underwater, \cite{mccammon2017planning, cao2023neptune}, aerial \cite{pratt2008tethered, martinez2021optimization, tognon2021physical}, and space \cite{cosmo1997tethers, wen2008advances, yu2018review} applications. 
These tasks are typically addressed either using remotely operated vehicles (ROVs) or unmanned autonomous vehicles (UAVs). 
In both cases, the cabled connection of a robot to an anchor point can be used as a power source, a communication channel, a lifeline to retrieve the robot in case of malfunctioning, and for accessing additional computational power \cite{yuh2000design, shnaps2014online}. 
However, these advantages do not come without challenges. The tether exerts an external force on the robot, due to gravity, drag, and inertia acting on it \cite{eidsvik2016time}, and it limits the reachable workspace, due to its finite length \cite{igarashi2010homotopic}. 
In addition, the tether may get entangled with obstacles present in the environment or with itself, forming knots. In case of multi-robot systems, this problem is amplified as the tethers of different robots can also get entangled with each other \cite{cao2023neptune, cao2023path}.
\par
Broadly speaking, entanglement occurs when the movement of a tethered robot is restricted due to the physical interaction of the tether with other objects in the environment \cite{cao2023neptune}.
Since this condition is, in general, disadvantageous or even dangerous for a tethered robot, it is important to avoid it during the robot's motion. This is typically achieved through the addition of a non-entanglement constraint.
Non-entanglement constraints capture the fact that some tether configurations, despite being achievable by a tethered robot within the maximum tether length, may hinder the mobility of the robot, requiring it to perform specific operations to recover full motion capabilities.
To be able to consistently prevent entanglement, we first need a definition that captures the occurrence of entanglement and that is measurable. 
Despite the interest in motion planning for tethered robots, entanglement has not been studied extensively in the literature. 
The existing definitions are generally limited and partial, as they focus only on specific instances of entanglement and specific applications. Moreover, the existing definitions often require specific assumptions on the tether and on the environment, which limits the applicability of those definitions.
\par
The main application of the entanglement definitions is to use them to add a non-entanglement constraint to the motion planning problem for tethered robots. Several works that use non-entanglement constraints in the motion planning problem for tethered robots already exist in the literature \cite{shapovalov2020tangle, cao2023path, hert1999motion, zhang2019planning, hert1997planar}. 
However, the resulting motion planning algorithms are often application-specific and tailored to a specific entanglement definition. The introduction of more general entanglement definitions can help in the development of more versatile entanglement-aware motion planning algorithms for tethered robots. 
In the example of Figure \ref{fig:example_motion_planning}, a non-entanglement constraint can be used to exclude one or more of the possible motion paths by evaluating the entanglement state of the tether configuration that would result from the robot moving along such paths.
A more in-depth study of the properties of the entanglement definitions can also lead to more efficient entanglement-aware motion planning algorithms for tethered robots.
For instance, the entanglement definitions can be used to compute a safe set in which the robot must stay to maintain the tether in a non-entangled state. Such a set can then be used as the domain for an existing motion planning algorithms for tethered robots, resulting in safety guarantees in terms of entanglement avoidance.
\begin{figure}[t]
	\centering
	\includegraphics[]{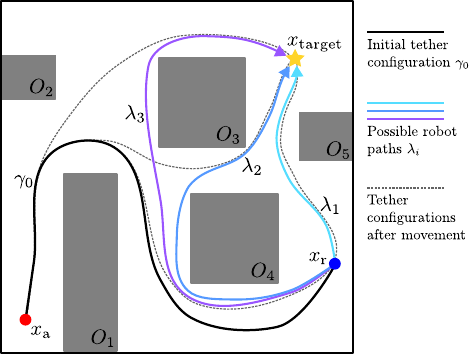}
	\caption{Example of a motion planning problem for a tethered robot in a 2D environment. The robot must reach the location $\bm{\x{target}}$ from its current location $\bm{\x{r}}$. Three possible paths $\bm{\lambda_1, \lambda_2, \lambda_3}$ are depicted in the image. Given the initial tether configuration $\bm{\gamma_0}$, the three possible paths would result in three different tether configurations after the motion of the robot, which are represented by the gray dashed lines. 
	Each of the resulting tether configuration can have a different entanglement state depending on the entanglement definition being used.}
	\label{fig:example_motion_planning}
\end{figure}
\par
In this work we provide and analyze a broad set of new entanglement definitions. 
After reviewing the existing entanglement definitions, we expand and generalize them by considering new entanglement definitions that can be used to complement the existing ones in the characterization of the entanglement state of a tether.
We analyze the properties of the definitions to highlight their advantages, disadvantages, and specific characteristics. 
In particular, we investigate the restrictions that these definitions introduce on the workspace of a tethered robot.
We also compare the definitions with each other and with the existing ones, highlighting the relations between them.
Finally, we validate the definitions empirically by comparing them against the opinion of experts in the field of tethered robotics using a survey.
\par
We remark that, in order to keep this article more focused, in this work we focus exclusively on the presentation, analysis, and comparison of the entanglement definitions. 
The actual application of these definitions to the motion planning problem will be addressed in our upcoming works.
\par
In summary, the contributions of this work are:
\begin{itemize}
	\item an extensive review and classification of the existing entanglement definitions;
	\item the introduction of a broad set of new entanglement definitions to expand and supplement the existing ones. Unlike most of the existing entanglement definitions, the new definitions are agnostic to the type of robot, environment, or tether considered, and can be applied both in 2D and 3D environments, to single-robot and multi-robot systems, to loose and slack tethers, and to fixed-length and variable-length tethers;
	\item the analysis of the properties of the proposed definitions, and the characterization of the workspace of a tethered robot under different entanglement definitions;
	\item a formal analysis of the relations between all the entanglement definitions, where we highlight the connections and the mutual relations between them;
	\item an empirical validation of the entanglement definitions against the opinion of experts in the field of tethered robotics.
\end{itemize}
\par
The article is organized as follows. 
Section~\ref{sec:related_work} contains a review of the existing literature on motion planning for tethered robots, with particular attention to the approaches used to define entanglement in previous works. 
Preliminary concepts and the problem statement are introduced in Section~\ref{sec:preliminaries}.
In Section~\ref{sec:definitions} the new entanglement definitions are stated. 
The properties of the proposed entanglement definitions are investigated in Section~\ref{sec:properties}, while the relations between the proposed definitions and the existing ones are analyzed in Section~\ref{sec:relationships}. 
The analysis of the entanglement definitions is concluded in Section~\ref{sec:validation} with their empirical validation by experts in the field of tethered robotics.
Conclusions and open points are given in Section~\ref{sec:conclusions}. 
The proofs of the results presented throughout the article are collected in Appendices \ref{appendix:proofs_definitions}, \ref{appendix:proofs_properties}, \ref{appendix:proofs_comparison}.

\section{Related work}
\label{sec:related_work}
In the last decades, significant attention has been devoted to the motion planning problem for tethered robots. A variety of different approaches has been investigated to tackle this problem \cite{kim2014path, bhattacharya2021search, brass2015shortest, narayanan2013planning, yang2022efficient_b}. 
Despite the large number of works on the topic, most of the works in the literature on tethered mobile robots do not consider any entanglement constraint, and focus only on finding a feasible solution for the motion planning problem under the geometric constraints posed by the presence of the tether \cite{kim2014path, sinden1990tethered, xavier1999shortest, bhattacharya2012identification, teshnizi2021motion , kim2015path}.
In the works were entanglement has been considered, it has typically been considered only from limited application-specific perspectives.
The majority of those works can be grouped into a few distinct categories depending on how they define and manage entanglement. 
The resulting categories are comprised of works that (i) consider entanglement as the contact between the tether and obstacles; (ii) consider entanglement as the contact between the tethers of different robots (in multi-robot systems); (iii) consider entanglement as the tether looping around obstacles.
Most of the works in the literature regard single-agent systems. Some works (in particular those in the third category) consider multi-robot systems instead, where several robots share the same environment.
\par
The works in the first category define entanglement as the contact between the tether and an obstacle. 
In \cite{rajan2016tether, teshnizi2014computing, mcgarey2016line} a robot connected to an anchor point via a taut tether in a 2D obstacle-rich environment is considered. Entanglement is not strictly prohibited, and its detection is used instead to keep track of the tether configuration, and to obtain information on the location of the obstacles.
In \cite{petit2022tape, martinez2021optimization} the same entanglement definition is applied to an aerial drone moving in a 3D environment. 
There the goal is to plan the motion of the robot and the variable\footnote{In some works such as  \cite{petit2022tape, abadmanterola2011motion, shapovalov2020tangle} the length of the tether is varied over time in order to keep it always taut by using a winch or a dedicated tether length control system.} length of the tether in order to avoid any contact between the taut tether and the obstacles in the environment.
This definition of entanglement is used also in \cite{almhdawi2021carti}, where entanglement is avoided through the use of micro-thrusters placed along the tether which allow to actively control its shape and keep it away from obstacles.
\par
Works in the second category consider multi-robot systems where entanglement is defined as the interaction between the tethers of two different robots. 
In \cite{sinden1990tethered, hert1996ties, zhang2019planning} a multi-robot system in an obstacle-free 2D environment is considered. The tethers of all robots are kept taut, so entanglement happens when a bend is formed in a tether.
A centralized path planning algorithm is tasked with finding a set of trajectories for the robots that avoid intersections between the tethers of the robots. When avoiding entanglement is impossible, the planner returns instead a motion strategy that minimizes the number of tethers getting entangled. 
The algorithm is extended to the 3D case in \cite{hert1999motion, patil2023coordinating}. 
A more general setting is tackled in \cite{cao2023neptune, cao2023path}, where a multi-robot system with slack tethers in a 3D environment is considered. \cite{cao2023path} focuses on providing non-entanglement guarantees for a large number of robots in a crowded, obstacle-free environment.
In \cite{cao2023neptune} a decentralized algorithm is used to compute kinodynamically feasible paths in the presence of obstacles. 
To identify tether configurations that are at risk of entanglement, \cite{cao2023neptune, cao2023path} analyze their topological properties through the computation of a signature\footnote{More details on the tether signature are provided in Section \ref{subsec:homotopy} and in Appendix \ref{appendix:proofs_definitions}.} of each tether configuration, and relate specific patterns in the signatures to the entanglement state of the tethers.
\par
The last group of works considered here defines entanglement as the looping of the tether around obstacles. In \cite{shapovalov2020tangle, mechsy2017novel, sharma20192approximation} a robot moving in a 2D obstacle-rich environment is considered, and any tether configuration that does a full loop around an obstacle is considered to be entangled and is therefore avoided.\footnote{A similar constraint is introduced for computational reasons also in other articles on motion planning for tethered robots such as \cite{kim2014path, bhattacharya2010search, salzman2015optimal}. However, therein the constraint is not intended explicitly for entanglement avoidance, so we do not formally consider those works as part of this category.}
A variation of this definition is considered in several works \cite{abadmanterola2011motion, tanner2013online, mahmud2021multi, mcgarey2017tslam} that focus on exploration and inspection tasks in 2D obstacle-rich environments, where a number of waypoints must be visited by a tethered robot before returning to the starting location. 
Since in those works the robot always returns to the initial location at the end of its motion, they consider only closed tether configurations, where the initial point and end point coincide. The non-entanglement constraint is then stated in the form of a non-looping constraint for the whole tether. This means that at the end of the motion there must be no obstacle being encircled by the tether \cite{mccammon2017planning}.

\section{Preliminaries}
\label{sec:preliminaries}
Some preliminary definitions are introduced first to facilitate the exposition of the entanglement definitions.

\subsection{Workspace topology}
\label{subsec:workspace_and_paths}
Let the workspace $\mathcal{X}$ be an open convex subset of $\mathbb{R}^n$ with $n\in\{2, 3\}$, and let $\{O_i\}_{i=1,\ldots, m}$ be a finite set of disjoint closed obstacles having a non-empty simply-connected interior and without degenerate boundary\footnote{An obstacle does not have a degenerate boundary if for any point of the boundary there is an arbitrarily close interior point \cite{grigoriev1998polytime}.} \cite{lavalle2006planning}.
We indicate with $\mathcal{O} = \cup_{i=1}^m O_i$ the obstacle region, corresponding to the part of $\mathcal{X}$ that is covered by obstacles. We indicate with $\partial\mathcal{O}$ the boundary of the obstacle region, and with $\interior\mathcal{O}$ its interior. 
The free workspace is defined as\footnote{We remark that the closure operator is introduced in the definition of the free workspace to make sure that the shortest path between any two points in $\mathcal{X}_\text{free}$ exists (see Lemma \ref{lemma:existence_shortest_path}) \cite{lavalle2006planning, latombe1991robot}.} $\mathcal{X}_\text{free} = \closure(\mathcal{X} \setminus \mathcal{O})$ \cite{lavalle2006planning}.
We assume that $\mathcal{X}_\text{free}$ is formed by a single path-connected component. If this condition were not true, there would be locations in the free space that could not be reached from a given starting point through a continuous path. In that case, the unreachable locations would be considered as parts of the obstacle region.  
\par
A \emph{path} $\gamma$ is a continuous function $\gamma:[0,1]\rightarrow\mathcal{X}$ \cite{hatcher2005algebraic}. In the rest of the current article, with a slight abuse of notation, $\gamma$ will be used to indicate both the image and the function that generates it. 
We denote by $\len(\gamma)$ the length of a path $\gamma$. In the following, we only consider finite-length paths.
A path $\gamma$ is said to be \emph{obstacle-free} if $\gamma(s)\in\mathcal{X}_\text{free}, \forall s\in[0,1]$.
The points $\gamma(0)$ and $\gamma(1)$ are called respectively initial point and terminal point of the path, or, collectively, endpoints of the path.
A reparametrization of a path $\gamma$ is a path $\gamma\circ\alpha$, where $\alpha$ is a continuous, non-decreasing surjective mapping $\alpha: [0,1]\rightarrow[0,1]$ with $\alpha(0)=0$ and $\alpha(1)=1$ \cite{chambers2010homotopic}. 
We indicate with $\mathcal{A}$ the set of all the mappings $\alpha:[0,1]\rightarrow[0,1]$ having these properties.
Given two paths $\gamma_1$, $\gamma_2$ such that $\gamma_1(1) = \gamma_2(0)$, their concatenation $\gamma_3 = \gamma_1 \diamond \gamma_2$ is the new path defined by
\begin{equation*}
	\gamma_3(s) =
	\begin{cases}
		\gamma_1(2s), & 0 \leq s \leq \frac{1}{2}, \\
		\gamma_2(1-2s), & \frac{1}{2} \leq s \leq 1.
	\end{cases}
\end{equation*}
For any path $\gamma$ from $x_1$ to $x_2$ we define the reverse path $\gamma^\text{reverse}(s) = \gamma(1-s)$, i.e., the path going from $x_2$ to $x_1$ \cite{hatcher2005algebraic}. The convex hull of a path $\gamma$ is defined as
\begin{equation*}
	\conv({\gamma}) = \left\{\sum_{i=1}^n a_i x_i : a_i\in\mathbb{R}, \sum_{i=1}^n a_i = 1, a_i \geq 0, x_i \in \gamma \right\}.
\end{equation*}
\par
We define now some relevant paths that will be used in the rest of this article. The first is the \emph{straight line segment} between two points. Given two points $x_1, x_2 \in \mathcal{X}$, the straight line segment $l_{x_1, x_2}:[0,1]\rightarrow\mathcal{X}$ between $x_1$ and $x_2$ is defined by $l_{x_1, x_2}(s) = (1-s) x_1 + s x_2$. 
Next, we introduce the restriction of a path $\gamma$ to an interval $[s_1, s_2]$. Given a path $\gamma$ and an interval $[s_1, s_2] \subseteq [0,1]$, we define the path $\gamma_{[s_1, s_2]}:[0,1]\rightarrow\mathcal{X}$ by $\gamma_{[s_1, s_2]}(s) = \gamma(s_1 + s(s_2-s_1))$, which corresponds to the part of the path $\gamma$ between $\gamma(s_1)$ and $\gamma(s_2)$.
We also define the notion of a shortest path among a set of paths: given a set of paths $S$ we denote a shortest path in $S$ by
\begin{equation*}
	\gamma^*_S \in \argmin_{\gamma\in S} \left(\,\len(\gamma)\right),
\end{equation*}
provided that such a path exists \cite{grigoriev1998polytime, hershberger1994computing}. 
We denote a globally shortest path between two points as $\gamma^*_{x_1, x_2} \coloneqq \gamma^*_{\Gamma_{x_1, x_2}}$,
where $\Gamma_{x_1, x_2}$ is the set of all paths between $x_1$ and $x_2$, defined as 
\begin{equation*}
	\Gamma_{x_1, x_2} = \{\gamma:[0,1]\rightarrow\mathcal{X}_\text{free}:\gamma(0) = x_1, \gamma(1) = x_2\}.
\end{equation*}
Given the topological properties of $\mathcal{X}_\text{free}$, there always exists a shortest path between two points, as formalized in Lemma \ref{lemma:existence_shortest_path} (see Appendix \ref{appendix:proofs_definitions}).
Finally, we define a \emph{taut path} as a locally shortest path. Formally, a path $\gamma$ is called taut if for all $s\in[0,1]$ there exists an interval $J_s$ containing a neighborhood of $s$ such that $\gamma_{J_s}$ is a shortest path \cite{burago2001course}.

\subsection{Paths and homotopy equivalence}
\label{subsec:homotopy}
We focus now on paths lying in $\mathcal{X}_\text{free}$.
In a topological space, the presence of obstacles (or `punctures') gives rise to multiple homotopy classes in which a path can lie. Informally, two paths $\gamma_1, \gamma_2$ are said to belong to the same homotopy class if they can be continuously transformed into one another without crossing any obstacle. Three types of homotopic equivalence relationships are considered in this work:

\subsubsection{Free Homotopy}
Two paths $\gamma_1, \gamma_2$ are said to be \emph{freely homotopic}\footnote{In the literature the word `freely' is often omitted when defining this type of homotopic relation. We use it to distinguish it more clearly from the other two types of homotopy, following the naming convention used in \cite{lee2010introduction}.} (or to belong to the same \emph{free homotopy class}) if there exists a continuous function $H:[0,1]\times[0,1]\rightarrow\mathcal{X}_\text{free}$ such that $H(s,0)=\gamma_1(s), H(s,1) = \gamma_2(s), \forall s\in[0,1]$ \cite{hatcher2005algebraic}. Being freely homotopic is an equivalence relation on the set of all continuous maps from $[0,1]$ to $\mathcal{X}_\text{free}$ \cite{lee2010introduction}. 
We indicate the existence of a freely homotopic relation between two paths with $\gamma_1\simeq\gamma_2$ \cite{lee2010introduction}.
The set of all the free homotopies between two curves $\gamma_1, \gamma_2$ is indicated with $\mathcal{H}_{\gamma_1, \gamma_2}$.
Given a homotopy $H$, we denote by $H(s, \cdot):[0,1] \rightarrow \mathcal{X}_\text{free}$ the path $t \mapsto H(s,t)$ that goes from the point $\gamma_1(s)$ to the point $\gamma_2(s)$.

\subsubsection{Path Homotopy}
Two paths $\gamma_1, \gamma_2$ with the same endpoints, i.e., such that $\gamma_1(0) = \gamma_2(0) = x_1$ and $\gamma_1(1) = \gamma_2(1) = x_2$, are said to be \emph{path-homotopic} if there exists a continuous mapping $H:[0,1]\times [0,1]\rightarrow\mathcal{X}_\text{free}$ such that $H(s,0)  = \gamma_1(s), H(s,1) = \gamma_2(s), \forall s\in [0,1]$, and $H(0,t)=x_1$, $H(1,t)=x_2, \forall t\in [0,1]$ \cite{hatcher2005algebraic}. 
Given a pair of fixed points $x_1$ and $x_2$, path homotopy is an equivalence relation on $\Gamma_{x_1, x_2}$ \cite{lee2010introduction}.
We indicate the existence of a path homotopy relationship between two paths with $\gamma_1 \sim \gamma_2$ \cite{lee2010introduction}. 
We indicate by $\Gamma_{x_1, x_2}/\sim$ the set of all path homotopy classes between the points $x_1$ and $x_2$, and by $[\gamma]$ the path homotopy class of a path $\gamma$ \cite{lee2010introduction}.
We denote the path homotopy between a path $\gamma$ and a point $x_0$ with $\gamma\sim x_0$, where $x_0$ indicates a constant map from the interval $[0,1]$ to the point $x_0$ \cite{lee2010introduction}.
\par
A common approach to determine the path homotopy class of a path consists in computing its \emph{signature}.
In 2D, a signature is a function $\mathbbm{h}:\Gamma_{x_1, x_2}\rightarrow \Gamma_{x_1, x_2}/\sim$ that maps every path $\gamma$ to its path homotopy class \cite{bhattacharya2018path}.
For more details on the constructions of signatures see \cite{bhattacharya2012topological, bhattacharya2018path, hatcher2005algebraic}, and Construction \ref{construction:signature} and Proposition \ref{prop:signature_is_invariant} in Appendix \ref{appendix:proofs_definitions}.

\subsubsection{Relative homotopy}
The third type of homotopy equivalence considered is that of \emph{relative homotopy} \cite{chambers2010homotopic}.\footnote{
	We largely base the definition of relative homotopy on \cite{chambers2010homotopic}. A definition of relative homotopy is given also in \cite{lee2010introduction}, but it is different from the one we use. 
	We also want to remark that, while we use a number of definitions and concepts from \cite{chambers2010homotopic}, the notation and the naming convention has been changed. This has been done mainly to uniformize the notation with the other homotopy definitions. 
	In particular, we do not look for a \emph{leash map} between two \emph{leashes}, but for a \emph{relative homotopy map} between two \emph{paths}.}
In this case we consider paths with distinct endpoints, and we require the homotopic transformation between the two paths to make the endpoints move along pre-specified paths.
Let $\gamma_1, \gamma_2$ be two paths, $\lambda$ be a path from $\gamma_1(0)$ to $\gamma_2(0)$, and $\lambda'$ be a path from $\gamma_1(1)$ to $\gamma_2(1)$. The paths $\gamma_1$ and $\gamma_2$ are said to be \emph{relatively homotopic along $\lambda, \lambda'$} if there exists a continuous mapping $H_{\lambda,\lambda'}:[0,1]\times [0,1]\rightarrow\mathcal{X}_\text{free}$ such that $H_{\lambda,\lambda'}(s, 0) = \gamma_1(s), H_{\lambda,\lambda'}(s, 1) = \gamma_2(s), \forall s \in [0,1]$, and that $H_{\lambda,\lambda'}(0, t) = \lambda(\alpha(t)), H_{\lambda,\lambda'}(1, t) = \lambda'(\alpha'(t)), \forall t \in [0,1]$, where $\alpha, \alpha' \in \mathcal{A}$ are, respectively, reparametrizations of $\lambda$ and $\lambda'$.
For any fixed pair of paths $\lambda, \lambda'$, being relatively homotopic along $\lambda$ and $\lambda'$ is an equivalence relation which we indicate  as $\gamma_1\simeq_{\lambda, \lambda'}\gamma_2$.

\subsection{Homotopic \Frechet Distance}
The homotopic \Frechet distance is a metric suitable to measure the distance between two paths \cite{alt2004comparison}. The homotopic \Frechet distance, which is a variation of the \Frechet distance \cite{alt1995computing}, measures the distance between two obstacle-free paths $\gamma_1$, $\gamma_2$ while taking into account the presence of obstacles between them \cite{chambers2010homotopic}.
More precisely, the homotopic \Frechet distance computes the distance between two paths $\gamma_1, \gamma_2$ as the maximum length of a path $H(s,\cdot)$ that belongs to a free homotopy $H$ between a reparametrization of $\gamma_1$ and a reparametrization of $\gamma_2$. The homotopic \Frechet distance is defined as
\begin{equation*}
	\bar{\F} (\gamma_1, \gamma_2) \coloneqq \inf_{H\in\mathcal{H}_{\gamma_1, \gamma_2}} \left[ \max_{s\in [0,1]} \len \left( H(s, \cdot) \right) \right],
\end{equation*}
where we recall that $\mathcal{H}_{\gamma_1, \gamma_2}$ is the set of all the free homotopies between the paths $\gamma_1$ and $\gamma_2$.

\subsection{Problem formulation}
\label{sec:problem_formulation}
Let $\mathcal{X}, \mathcal{O}, \mathcal{X}_\text{free}$ be respectively the workspace, the obstacle region, and the free workspace where a tethered robot operates. 
The tether is represented by a finite-length obstacle-free path $\gamma$ called \emph{tether configuration}. 
The tether initial point $\x{a} = \gamma(0)$ represents the anchor point, while the tether terminal point $\x{r} = \gamma(1)$ represents the robot location.
Given a tether configuration $\gamma$, we seek to determine its entanglement state.
In particular, we aim to obtain a set of general entanglement definitions that do not depend on the properties of the tether nor on the properties of the environment, and that align with the human intuition of what entanglement is.
\par
The proposed definitions work both for single-robot and multi-robot systems. 
In case of multi-robot systems, a set of $q$ robots $\mathcal{I} = \{1,2,\ldots, q\}$ is considered. 
The robots are located in distinct points $x_{\text{r}, i}, i\in\mathcal{I}$, and their tethers have distinct anchor points $x_{\text{a}, i}, i\in\mathcal{I}$.
For the sake of simplicity, it is assumed that the tethers do not intersect with each other, i.e., $\gamma_i \cap \gamma_j = \emptyset, \forall i \ne j$.
The entanglement state is determined for a single robot $i\in\mathcal{I}$. From the point of view of robot $i$ the other tethers $\gamma_j, j\in\mathcal{I}\setminus \{i\}$ are seen as obstacles; specifically, the obstacle region for robot $i$ is $\mathcal{O}_i = \mathcal{O} \cup \bigcup_{j=1, j\neq i}^q O_\epsilon(\gamma_j)$ for some $\epsilon > 0$, where $O_\epsilon(\gamma_j)$ is the $\epsilon$-inflation\footnote{The $\epsilon$-inflation of a path $\gamma_j$ is the set $O_\epsilon(\gamma_j)=\{x:\text{dist}(x, \gamma_j)\leq\epsilon, x\in\mathcal{X}\}$. 
We introduce this representation of the tethers of the other robots in order to unify the treatment of all the entanglement definitions, and to make the definition of $\mathcal{X}_\text{free}$ consistent with the use of homotopy transformations. In fact, if the tether of another robot were represented solely as a curve, it would have no effect on $\mathcal{X}_\text{free}$ due to the use of the $\text{cl}(\cdot)$ operator in its definition, and thus it would be ignored when defining homotopic transformations \cite{grigoriev1998polytime}.} of the tether~$\gamma_j$~\cite{grigoriev1998polytime}.
In multi-robot systems, a tether is then considered not entangled according to some non-entanglement definition if there exists an $\epsilon>0$ for which the tether is not entangled under the corresponding $\epsilon$-inflation of the other tethers.
\par
It is worth mentioning here that in the literature a difference is sometimes made between taut and slack tethers. The definitions proposed in this article can handle both types of tether configurations and we will typically not distinguish between them.
In addition, all the proposed definitions can handle both a 2D and a 3D workspace.
Following an assumption commonly used in the literature, in 2D the tether is allowed to intersect with itself, forming loops, which in practice allows the robot to cross its own tether.

\section{Non-entanglement definitions}
\label{sec:definitions}
In this section we present the new non-entanglement definitions.
The proposed definitions provide a criterion to determine when a given tether configuration is not entangled, which is why from now on we will refer to the definitions as \emph{non-entanglement definitions}.\footnote{The reason for this choice versus considering \emph{entanglement definitions} is that, in most of the problems in which we plan to use the definitions, the goal is to maintain a non-entangled tether configuration. Therefore, we state the definitions using conditions that identify a non-entangled tether configuration, so they can be applied directly in the form in which they are presented here.} 
We start this section with the review of the definitions available in the literature (Section \ref{subsec:existing_definitions}), which is organized following the categories that have been introduced in Section \ref{sec:related_work}. 
We then proceed to present the proposed non-entanglement definitions (Section \ref{subsec:proposed_definitions}). 
We conclude this section by discussing a relaxation of the non-entanglement definitions (Section \ref{subsec:delta_relaxation}).

\subsection{Existing non-entanglement definitions}
\label{subsec:existing_definitions}
The non-entanglement definitions found in the literature are reviewed here. Not all the definitions are stated as found in the literature, but in some cases they are reported in an equivalent formulation in order to adhere to our convention of providing non-entanglement definitions and to keep a uniform notation.
\par
The first category of works identified in Section \ref{sec:related_work} defines entanglement as a situation where a taut tether forms a bend at the point of contact with an obstacle \cite{rajan2016tether, teshnizi2014computing}.
\begin{definition}[Taut Tether Contact with Obstacle]
	\label{def:tether_contact_with_obstacle}
	Given two points $\x{a}, \x{r} \in \mathcal{X}_\text{free}$ and a taut tether configuration $\gamma$, the tether is to not entangled if $\gamma = l_{\x{a}, \x{r}}$, i.e., if the taut tether coincides with a straight line segment. 
\end{definition}
\par
Works in the second category consider multi-robot systems in obstacle-free environments and define entanglement as the creation of a bend in a tether due to the interaction with another robot's tether \cite{sinden1990tethered, hert1999motion}.
\begin{definition}[Taut Tether Contact with Other Tethers]
	\label{def:taut_tether_contact_with_other_tether}
	Let $\mathcal{I} = \{1, 2, \ldots, q\}$ represent a set of $q$ robots composing a multi-robot system in an environment where the tethers of the robots are the only obstacles.
	Given the robot $i\in\mathcal{I}$, which is located in $x_{\text{r}, i}$ and is connected through a taut tether $\gamma_i$ to its anchor point $x_{\text{a}, i}$, the tether $\gamma_i$ is not entangled if $\gamma_i = l_{x_{\text{a}, i}, x_{\text{r}, i}}$.
\end{definition}
A more general version of Definition \ref{def:taut_tether_contact_with_other_tether} is provided in \cite{cao2023neptune, cao2023path}, where entanglement between multiple slack tethers is defined.
\begin{definition}[Entanglement between Slack Tethers]
	\label{def:risk_of_entanglement}
	Let $\mathcal{I} = \{1, 2, \ldots, q\}$ represent a set of $q$ robots composing a multi-robot system in a 3D environment.
	Consider a robot $i\in\mathcal{I}$, its tether configuration $\gamma_i$, and a homotopy signature of its tether $\mathbbm{h}(\gamma_i)$ computed on a projection of the environment on a 2D plane.\footnote{More details on the projection procedure can be found in \cite{cao2023neptune, cao2023path}.}
	The tether configuration $\gamma_i$ is not entangled if its signature $\mathbbm{h}(\gamma_i)$ contains the letter $z_j$ corresponding to another robot $j\in\mathcal{I}\setminus \{i\}$ at most once.
\end{definition}
Intuitively, the definition corresponds to requiring that the tethers of two different robots do not cross each other more than once.
Otherwise, they could be at a configuration like the one shown in Figure \ref{figure:entanglement_example_multi_robot}, where the tethers are at risk of coming in contact if the two robots continue moving in their current directions.
\begin{figure}[t]
	\centering
	\includegraphics[scale=0.8]{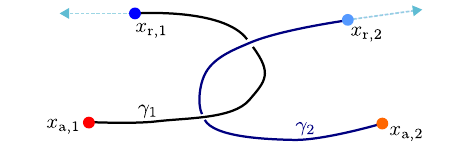}
	\caption{Example of entanglement with respect to Definition \ref{def:risk_of_entanglement}. Two robots and their respective tethers are shown in an obstacle-free 3D environment. If the robots continue moving along the dashed lines, the tethers will come into contact, restricting the motion capabilities of the two robots. The image is adapted from \cite{cao2023neptune}.}
	\label{figure:entanglement_example_multi_robot}
\end{figure}
\par
The third category of non-entanglement definitions considers tethers that form loops around obstacles. 
The first version of this definition considers only 2D scenarios and defines as entangled all configurations containing a loop (i.e., a self-intersection of the tether) around an obstacle \cite{shapovalov2020tangle}.
\begin{definition}[2D Tether Loop around Obstacle]
	\label{def:local_looping}
	In a 2D workspace, a tether configuration $\gamma$ is not entangled if, for any $s_1, s_2 \in [0,1], s_1 \leq s_2$ such that $\gamma(s_1) = \gamma(s_2)$, it holds that $\gamma_{[s_1, s_2]} \sim \gamma(s_1)$.
\end{definition}
Definition \ref{def:local_looping} requires that every loop in the tether is path-homotopic to a constant map.\footnote{In \cite{shapovalov2020tangle} this definition is stated as $\gamma(s) \!=\! \gamma(s') \! \iff \! s \!=\! s'$. However, in \cite{shapovalov2020tangle} a taut tether is assumed, which implies that loops can only happen around obstacles. The version of the definition reported here is a generalization of the definition to any type of tether configuration.}
The second version of this definition uses the same type of constraint but extends it to the whole tether configuration, considering only closed tether configurations, i.e., those where $\x{a} = \x{r}$. This corresponds to the situation where a robot has traveled in the environment and has returned to $\x{a}$. 
In this case, the requirement for non-entanglement is that the tether configuration can be homotopically deformed to the anchor point \cite{mccammon2017planning}. 
\begin{definition}[Closed Tether Homotopy to Constant Map]
	\label{def:closed_tether_homotopy_to_constant_map}
	A tether configuration $\gamma$ such that $\gamma(0) = \gamma(1) = x_\text{a}$ is not entangled if $\gamma \sim \x{a}$, i.e., if $\gamma$ is path-homotopic to $\x{a}$.
\end{definition}

\subsection{Proposed non-entanglement definitions}
\label{subsec:proposed_definitions}
The first two proposed non-entanglement definitions focus on some transformation that the tether must be able to achieve in order to be considered not entangled. In the first case, we want that any part of the tether can be made taut without encountering obstacles (i.e., that for any two points $x_1, x_2 \in \gamma$ we have $l_{x_1, x_2}\cap\interior\mathcal{O}=\emptyset$). We express this condition by requiring that the convex hull of the tether is obstacle-free.
\begin{definition}[Obstacle-free Convex Hull]
	\label{def:obstacle_free_conv_hull}
	A tether configuration $\gamma$ is not entangled if its convex hull does not intersect with any obstacle, i.e., $\conv(\gamma)\cap\interior\mathcal{O}=\emptyset$.
\end{definition}
In the second definition we require instead that the tether can be continuously retracted to the anchor point by shortening/rewinding the tether without encountering any obstacles in the path. 
\begin{definition}[Obstacle-free Linear Homotopy]
	\label{def:obstacle_free_linear_homotopy}
	A tether configuration $\gamma$ is not entangled if the linear homotopic map $H:[0,1]\times [0,1]\rightarrow\mathcal{X}$ defined by
	\begin{equation}
		H(s,t) = (1-t)\,\gamma(s) + t\,\x{a} 
		\label{eq:linear_homotopy}
	\end{equation}
	does not intersect with the interior of the obstacle region $\mathcal{O}$, i.e., if
	\begin{equation*}
		l_{\gamma(s), \x{a}}\cap\interior\mathcal{O}=\emptyset, \forall s\in[0,1].
	\end{equation*}
\end{definition}
An example of application of the Obstacle-free Convex Hull and Obstacle-free Linear Homotopy non-entanglement definitions (Definition \ref{def:obstacle_free_conv_hull} and \ref{def:obstacle_free_linear_homotopy}) is shown in Figure \ref{fig:conv_hull_linear_homotopy}.
\par
In practice, the Obstacle-free Convex Hull and Obstacle-free Linear Homotopy non-entanglement definitions (Definition \ref{def:obstacle_free_conv_hull} and \ref{def:obstacle_free_linear_homotopy}) result to be quite conservative in determining if a tether configuration is entangled or not. However, these definitions represent two simple criteria to define a set of non-entangled configurations that can be used as a starting point to define more complex and general non-entanglement definitions.
\begin{figure}[t]
	\centering
	\includegraphics[]{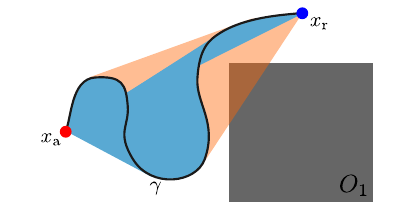}
	\caption{Example of the Obstacle-free Convex Hull and Obstacle-free Linear Homotopy non-entanglement definitions (Definition \ref{def:obstacle_free_conv_hull} and \ref{def:obstacle_free_linear_homotopy}) applied to a tether configuration $\bm{\gamma}$. The blue shaded region represents the set of points covered by the linear homotopic transformation $\bm{H}$ from $\bm{\gamma}$ and to $\bm{\x{a}}$. The blue shaded region does not intersect with the obstacle $\bm{O_1}$, so the configuration is not entangled with respect to the Obstacle-free Linear Homotopy definition (Definition \ref{def:obstacle_free_linear_homotopy}). However, the same configuration is entangled with respect to the Obstacle-free Convex Hull definition (Definition \ref{def:obstacle_free_conv_hull}). In fact, the light-orange shaded area, which corresponds to $\bm{\textrm{\textbf{conv}}(\gamma)}$, intersects with $\bm{O_1}$.}
	\label{fig:conv_hull_linear_homotopy}
\end{figure}
\par
A less conservative definition is introduced next. 
This definition determines if a given tether configuration is entangled or not on the base of the configurations that can be achieved by moving the terminal point of the tether, rather than evaluating the entanglement state of the tether solely based on its current configuration.
More specifically, we consider the set of configurations that can be obtained by moving $\x{r}$ along some path $\lambda$ belonging to a set $\Lambda_{\x{r}, p}$. 
The set $\Lambda_{\x{r}, P}$ consists of obstacle-free paths (i.e., such that $\lambda \cap \interior\mathcal{O} = \emptyset, \forall \lambda\in\Lambda_{\x{r}, P} $) having their initial point in the current location of the robot (i.e., $\lambda(0) = \x{r}$), and satisfying some property $P$. 
For example, the property $P$ can be defined as the length constraint $\mathrm{maxlen}:\len(\lambda) \leq d_\text{max}$, where $d_\text{max}>0$ is the maximum path length.
\begin{definition}[Path Homotopy to Safe Set]
	\label{def:homotopy_to_safe_set}
	Let $\Gamma_{x_\text{a}}^\text{safe}$ be a set of tether configurations that have their initial point at $\x{a}$, and which are considered to be safe, and let $\lambda\in\Lambda_{\x{r}, P}$ be a set of obstacle-free paths along which the robot can move from its current location and that satisfy property $P$. A tether configuration $\gamma$ from $\x{a}$ to $\x{r}$ is not entangled if there exists a path $\lambda \in \Lambda_{x_\text{r}, P}$ and a configuration $\bar{\gamma}\in\Gamma_{x_\text{a}}^\text{safe}$ such that $\gamma$ is relatively homotopic to $\bar{\gamma}$ along $\lambda$, i.e., if $\gamma\sim_{\x{a}, \lambda}\bar{\gamma}$.
\end{definition}
We remark that the equivalence $\sim_{\x{a}, \lambda}$ indicates that the two paths $\gamma$, $\bar{\gamma}$ must be relatively homotopic along the paths $\lambda$ and $x_\text{a}$, which means that the initial point of $\gamma$ and $\bar{\gamma}$ remains fixed at $\x{a}$ during the homotopic transformation.
\par
In the Path Homotopy to Safe Set non-entanglement definition, both the sets $\Gamma_{x_\text{a}}^\text{safe}$ and $\Lambda_{x_\text{r}, P}$ can be defined arbitrarily and can be adapted to the specific application, environment, and tethered robotic system being considered. 
This provides considerable flexibility to this entanglement definition, which can be made more or less conservative in detecting entanglement. For example, the sets  $\Gamma_{x_\text{a}}^\text{safe}$ and $\Lambda_{x_\text{r}, P}$ could be defined starting from the dynamic models of the tether and the robot, ensuring that safe tether configurations can actually be reached under the kynodynamical properties of the system. 
This, however, is at the same time also the main drawback of this definition since, in practice, the proper selection of the sets $\Gamma_{x_\text{a}}^\text{safe}$ and $\Lambda_{x_\text{r}, P}$ requires some knowledge of the properties of the robot and of the environment.
An example of application of the Path Homotopy to Safe Set non-entanglement definition is illustrated next.
\begin{figure}[t]
	\centering
	\includegraphics[scale=0.85]{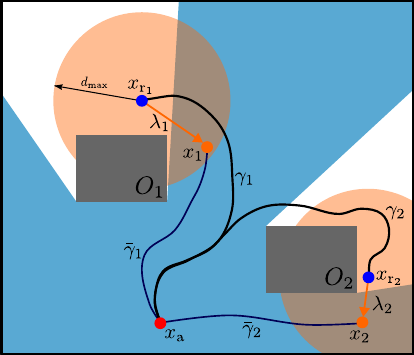}
	\caption{Example of the application of the Path Homotopy to Safe Set definition. The set of safe configurations $\bm{\Gamma_{\x{a}}}^\textrm{\textbf{safe}}$ is visualized as the set of all points that are reachable through at least one configuration that is not entangled with respect to the Obstacle-free Convex Hull definition (Definition \ref{def:obstacle_free_conv_hull}), and is represented as the blue shaded area. 
	The sets of paths $\bm{\Lambda_{x_{\textrm{\textbf{r}}_i}, \mathrm{maxlen}}}, i=1,2$ are defined as the sets of all straight line paths starting from $\bm{x_{r_i}}, i=1,2$ and having length less than or equal to $\bm{d_\textrm{\textbf{max}}}$. The sets $\bm{\Lambda_{x_{\textrm{\textbf{r}}_1},\mathrm{maxlen}}}$ and $\bm{\Lambda_{x_{\textrm{\textbf{r}}_2},\mathrm{maxlen}}}$ are visualized by the orange shaded areas.}
	\label{fig:path_to_safe_set}
\end{figure}
\begin{example}
	\label{example:safe_set_defintion}
	Let $\Gamma_{x_\text{a}}^\text{safe}$ be the set of all tether configurations having $x_a$ as initial point and satisfying the Obstacle-free Convex Hull non-entanglement definition (Definition \ref{def:obstacle_free_conv_hull}). 
	Also, for each $x_{\text{r}_i}$, let $\Lambda_{x_{\text{r}_i}, \mathrm{maxlen}}$ be the set of obstacle-free straight paths having their initial point in $x_{\text{r}_i}$ and satisfying the property $\mathrm{maxlen}:\len(\lambda) \leq d_\text{max}$. 
	Figure \ref{fig:path_to_safe_set} shows two example tether configurations for which we check the Path Homotopy to Safe Set definition given the sets $\Gamma_{x_\text{a}}^\text{safe}$ and $\Lambda_{x_\text{r}, \mathrm{maxlen}}$ just described. 
	The blue shaded area represents the set	of all the points that can be reached through a tether configuration that is not entangled according to Definition \ref{def:obstacle_free_conv_hull}.
	The tether configuration $\gamma_1$ is not entangled. In fact, the path $\lambda_1$ allows to reach the point $x_1$, for which there exists a safe non-entangled configuration according to the Obstacle-free Convex Hull definition (Definition \ref{def:obstacle_free_conv_hull}). 
	Most importantly, the non-entangled path $\bar{\gamma}_1$, which reaches $x_1$ from $\x{a}$ and which is not entangled according to the Obstacle-free Convex Hull definition (Definition \ref{def:obstacle_free_conv_hull}), is relatively homotopic to $\gamma_1$ along $\lambda_1$.
	On the contrary, $\gamma_2$ is entangled. In fact, despite the existence of the path $\lambda_2$ that goes from $x_{\text{r}_2}$ to $x_2$, the point $x_2$ is only reachable from $\x{a}$ through safe configurations that are not relatively homotopic to $\gamma_2$ along $\lambda_2$, e.g., the configuration $\bar{\gamma}_2$.
\end{example}
One more non-entanglement definition is introduced now. This definition is topology-based and determines the entanglement state of a tether configuration based on the tether location relative to the obstacles, and possibly to the other tethers, present in the environment.
This definition has some similarities with Definition \ref{def:risk_of_entanglement}, as it identifies as entangled tether configurations that go around an obstacle, as in the example of Figure \ref{figure:entanglement_example_multi_robot}. 
However, this non-entanglement definition improves Definition \ref{def:risk_of_entanglement} in two ways, namely, it considers both 2D and 3D environments with general types of obstacles, and it does not rely on the projection of the tether configurations on 2D planes.
In this definition we require that, if between two points $x_1$, $x_2\in\gamma$ there are no obstacles, i.e., $l_{x_1, x_2} \cap\interior\mathcal{O}=\emptyset$, then it must be possible to make the piece of tether between those two points taut without crossing any obstacles, i.e., the piece of tether between $x_1$, $x_2$ must be path-homotopic to $l_{x_1, x_2}$. 
\begin{definition}[Local Visibility Homotopy]
	\label{def:local_visibility_homotopy}
	A tether configuration $\gamma$ is not entangled if, for any pair of points $x_1 = \gamma(s_1)$, $x_2 = \gamma(s_2)$ such that  $l_{x_1, x_2}\cap\interior\,\mathcal{O}=\emptyset$, it holds that
	\begin{equation}
		\label{eq:local_visibility_homotopy}
		\gamma_{[s_1, s_2]} \sim l_{x_1, x_2}.
	\end{equation}
\end{definition}
\begin{figure}[t]
	\centering
	\includegraphics[scale=0.9]{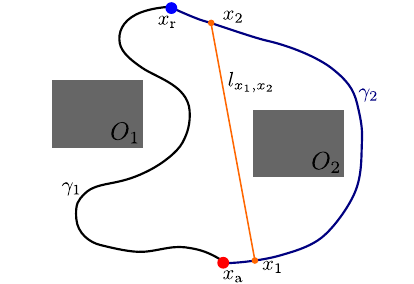}
	\caption{Example of the application of the Local Visibility Homotopy non-entanglement definition for two different tether configurations. Configuration $\bm{\gamma_{1}}$ is not entangled. On the contrary, $\bm{\gamma_{2}}$ is entangled. In fact, the part of the path $\bm{\gamma_{2}}$ between $\bm{x_{1}}$ and $\bm{x_{2}}$ is not homotopic to the straight line $\bm{l_{x_{1}, x_{2}}}$ between the points $\bm{x_{1}}$ and $\bm{x_{2}}$ due to the presence of obstacle $\bm{O_{2}}$.}
	\label{fig:local_visibility_homotopy}
\end{figure}
An example of application of the Local Visibility Homotopy definition is shown in Figure \ref{fig:local_visibility_homotopy}.
\par
We remark that for 3D multi-robot scenarios, the Local Visibility Homotopy non-entanglement definition (Definition \ref{def:local_visibility_homotopy}) does not always identify as entangled tether configurations that go around the tether of another robot. In fact, when the tether of another robot does not form a loop, there always exists a homotopy between a curve $\gamma_{[s_1, s_2]}$ and the corresponding straight-line segment $l_{x_1, x_2}$.
A possible solution for this is to require that the homotopic \Frechet distance between the curves $\gamma_{[s_1, s_2]}$ and $l_{x_1, x_2}$ is equal to or less than some value $\beta$, i.e., $\bar{\mathcal{F}}(\gamma_{[s_1, s_2]}, l_{x_1, x_2})\leq\beta$. A possible choice for the value of $\beta$ is $\beta = \len(\gamma_{[s_1, s_2]})$.

\subsection{Relaxation of the non-entanglement constraints}
\label{subsec:delta_relaxation}
In this section we introduce a relaxation of the non-entanglement definitions proposed up to this point, in order to make them less conservative in the detection of entanglement.
Given a tether configuration that is not entangled according to one of the non-entanglement definitions, we observe that a small variation in the configuration can lead to it being identified as entangled, as shown in the example depicted in Figure \ref{fig:local_visibility_homotopy_perturbation} for the Local Visibility Homotopy non-entanglement definition (Definition \ref{def:local_visibility_homotopy}). 
In many applications, limited violations of the non-entanglement constraints can be considered acceptable, as they do not immediately harm the mobility of a tethered robot. For this reason, it may be desirable to make the non-entanglement definitions less conservative, allowing a certain amount of violation of the constraint.
\par
To achieve this, we introduce a $\delta$-relaxed version of the non-entanglement definitions, which, given a tether configuration $\gamma$ that is not entangled according to the original definition, considers as not entangled any tether configuration $\gamma'$ that is path-homotopic to $\gamma$ and is $\delta$-close to $\gamma$.
The closeness between the two tether configurations $\gamma$ and $\gamma'$ is measured using the homotopic \Frechet distance.
\begin{definition}[$\delta$-Relaxed Non-Entanglement Definition $d$]
	\label{def:delta_relaxed_definition}
	A tether configuration $\gamma$ is considered to be not entangled if there exists a tether configuration $\gamma'$ such that:
	\begin{enumerate}[label=\roman*)]
		\item $\gamma'$ is not entangled according to Definition $d$;
		\item $\gamma\sim\gamma'$;
		\item $\bar{\mathcal{F}}(\gamma, \gamma') \leq \delta$;
	\end{enumerate}
	where $\delta\in[0,\infty]$ is the maximum value that the homotopic \Frechet distance between $\gamma$ and the target tether configuration $\gamma'$ can have.
\end{definition}
The value of $\delta$ can be arbitrarily chosen and determines the allowed amount of violation of the non-entanglement constraint of a given definition before a tether configuration is considered to be entangled. 
\begin{figure}[t]
	\centering
	\includegraphics[scale=0.95]{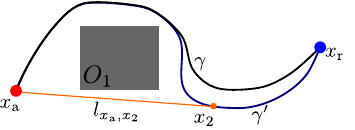}
	\caption{An example of the sensitivity of the Local Visibility Homotopy non-entanglement definition (Definition \ref{def:local_visibility_homotopy}) to variations in the tether configuration. Configuration $\bm{\gamma}$ is not entangled with respect to the definition. On the contrary, configuration $\bm{\gamma'}$ (which is identical to $\bm{\gamma}$ everywhere except for the variation above obstacle $\bm{O_1}$) is entangled, as the straight line $\bm{l_{\x{a}, x_2}}$ is not homotopic to $\bm{\gamma'_{\x{a}, x_2}}$.}
	\label{fig:local_visibility_homotopy_perturbation}
\end{figure}
We provide now an example of relaxation of a non-entanglement constraint. 
In particular, we discuss the $\infty$-relaxation of the Local Visibility Homotopy non-entanglement definition (Definition \ref{def:local_visibility_homotopy}). 
We remark that, by choosing $\delta=\infty$, we are effectively removing condition iii) from Definition \ref{def:delta_relaxed_definition}, resulting in the following definition.
\begin{definition}[Path Class-Relaxed Local Visibility Homotopy]
	\label{def:path_class_relaxed_local_visibility_homotopy}
	A tether configuration $\gamma$ is not entangled if there exists a tether configuration $\gamma'\in\Gamma_{\x{a}, \x{r}}$ such that:
	\begin{enumerate}[label=\roman*)]
		\item $\gamma'$ is not entangled according to Definition \ref{def:local_visibility_homotopy};
		\item $\gamma'\sim\gamma$.
	\end{enumerate}
\end{definition}
We choose to analyze this specific relaxation because, as will become more clear in Section \ref{sec:relationships} during the comparison of the definitions, the Path Class-Relaxed Local Visibility Homotopy definition generalizes well many of the other definitions. As we show in Section \ref{sec:relationships}, if a tether configuration is entangled according to this definition, then it is also entangled according to many of the other definitions.
\par
The Path Class-Relaxed Local Visibility Homotopy non-entanglement definition extends the non-entanglement property from a path $\gamma$ to the path homotopy class $[\gamma]$, i.e., if a path $\gamma$ is not entangled according to the Local Visibility Homotopy definition (Definition \ref{def:local_visibility_homotopy}), then any path $\gamma'$ such that $[\gamma'] = [\gamma]$ is also not entangled.
This means that, in the  Path Class-Relaxed Local Visibility Homotopy non-entanglement definition the entanglement state of a tether does not depend on the specific tether configuration, but on the path homotopy class in which it lies. For example, in a scenario such as the one depicted in Figure \ref{fig:local_visibility_homotopy_perturbation} both configurations would be considered not entangled since they belong to the same path homotopy class.

\section{Characterization of the free workspace under the non-entanglement definitions}
\label{sec:properties}
In this section, we characterize the part of the free workspace that is reachable by a tethered robot under the different entanglement definitions.
In fact, given a non-entanglement definition, in general only a subset of the workspace can be reached through a non-entangled tether configuration. This effectively limits the part of the free workspace in which a tethered robot can move without getting its tether entangled. 
Knowing this restriction is useful during the development of entanglement-free motion planning algorithms for tethered robots.
The proofs of the results presented in this section are reported in Appendices \ref{appendix:proofs_properties}.
\bigbreak
Given a free workspace $\mathcal{X}_\text{free}$, an anchor point location $x_\text{a}$, and a non-entanglement definition $d$, we find that in general only a subset of the points of $\mathcal{X}_\text{free}$ can be reached through a non-entangled tether configuration. We call this set the \emph{non-entangled free workspace}, and define it as
\begin{align*}
	\mathcal{N}_{\x{a}, d} =& \{x : \exists \gamma\in\Gamma_{x_\text{a}, x_\text{r}}, \\
	& \text{s.t. } \gamma \text{ is not entangled under Definition $d$} \}.
\end{align*}
\par
We start the characterization of the non-entangled free workspace from the non-entanglement definitions found in the literature, i.e., Definitions \ref{def:tether_contact_with_obstacle} -- \ref{def:closed_tether_homotopy_to_constant_map}. 
We note that for Definition \ref{def:risk_of_entanglement} the non-entangled free workspace is not computed, since the set $\mathcal{N}_{\x{a}, \ref{def:risk_of_entanglement}}$ depends on the tether configurations of the other robots, and not just on $\mathcal{X}_\text{free}$ and $x_\text{a}$. We also observe that $\mathcal{N}_{\x{a}, \ref{def:closed_tether_homotopy_to_constant_map}} = \{x_\text{a}\}$, since in Definition \ref{def:closed_tether_homotopy_to_constant_map} only closed tether configurations are considered, i.e., where $x_\text{r} = x_\text{a}$.
For Definitions \ref{def:tether_contact_with_obstacle} and \ref{def:taut_tether_contact_with_other_tether} the sets $\mathcal{N}_{\x{a}, \ref{def:tether_contact_with_obstacle}}$ and $\mathcal{N}_{\x{a}, \ref{def:taut_tether_contact_with_other_tether}}$ are straightforward to find.
\begin{proposition}
	\label{prop:reachable_set_straight_tether}
	The sets $\mathcal{N}_{\x{a}, \ref{def:tether_contact_with_obstacle}}$ and $\mathcal{N}_{\x{a}, \ref{def:taut_tether_contact_with_other_tether}}$ coincide, and they are equal to the set of points that can be reached from $\x{a}$ through a straight line segment that does not intersect with any obstacle, i.e., $\mathcal{N}_{\x{a}, \ref{def:tether_contact_with_obstacle}} = \mathcal{N}_{\x{a}, \ref{def:taut_tether_contact_with_other_tether}} = \{ x: \exists l_{x, \x{a}}, l_{x, \x{a}} \cap\interior\mathcal{O}=\emptyset\}$.
\end{proposition}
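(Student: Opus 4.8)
The plan is to reduce both non-entangled free workspaces to a single membership condition and then translate that condition into the stated obstacle-free-segment description. The starting observation is that Definitions~\ref{def:tether_contact_with_obstacle} and~\ref{def:taut_tether_contact_with_other_tether} declare a tether configuration $\gamma$ joining $\x{a}$ to a robot point $x$ non-entangled under exactly the same criterion, namely $\gamma = l_{\x{a}, x}$; the two definitions differ only in what plays the role of the obstacle region, but the functional form of the non-entanglement condition is identical. Consequently, for $d \in \{\ref{def:tether_contact_with_obstacle}, \ref{def:taut_tether_contact_with_other_tether}\}$ a point $x$ lies in $\mathcal{N}_{\x{a}, d}$ if and only if the straight line segment $l_{\x{a}, x}$ is itself an admissible tether configuration, i.e., $l_{\x{a}, x} \in \Gamma_{\x{a}, x}$. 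Indeed, if some non-entangled $\gamma \in \Gamma_{\x{a}, x}$ exists it must equal $l_{\x{a}, x}$; conversely, being a straight segment, $l_{\x{a}, x}$ is a taut path (a locally shortest path), so whenever it is admissible it directly certifies that $x \in \mathcal{N}_{\x{a}, d}$.

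It remains to show that $l_{\x{a}, x} \in \Gamma_{\x{a}, x}$ is equivalent to $l_{x, \x{a}} \cap \interior\mathcal{O} = \emptyset$. Since membership in $\Gamma_{\x{a}, x}$ amounts to the image of $l_{\x{a}, x}$ lying in $\mathcal{X}_\text{free}$, the key step is the pointwise characterization $\mathcal{X}_\text{free} \cap \mathcal{X} = \mathcal{X} \setminus \interior\mathcal{O}$. I would establish this from the definition $\mathcal{X}_\text{free} = \closure(\mathcal{X} \setminus \mathcal{O})$: a point of $\interior\mathcal{O}$ has a neighborhood contained in $\mathcal{O}$ and so cannot be a limit of points of $\mathcal{X} \setminus \mathcal{O}$, whereas a point of $\mathcal{X} \setminus \interior\mathcal{O}$ is either already in $\mathcal{X} \setminus \mathcal{O}$ or lies on $\partial\mathcal{O}$, in which case the closedness of the obstacles together with the non-degeneracy of their boundary guarantees arbitrarily close points of $\mathcal{X} \setminus \mathcal{O}$, placing it in the closure. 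Because $\mathcal{X}$ is convex and both endpoints lie in it, the segment $l_{\x{a}, x}$ stays inside $\mathcal{X}$, so its image lies in $\mathcal{X}_\text{free}$ exactly when it avoids $\interior\mathcal{O}$.

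Combining the two reductions yields $\mathcal{N}_{\x{a}, \ref{def:tether_contact_with_obstacle}} = \mathcal{N}_{\x{a}, \ref{def:taut_tether_contact_with_other_tether}} = \{x : l_{x, \x{a}} \cap \interior\mathcal{O} = \emptyset\}$, which is the claim. I expect the only genuinely delicate point to be the boundary case in the topological characterization: showing that a tether may graze $\partial\mathcal{O}$ without being excluded from $\mathcal{X}_\text{free}$, which is precisely where the no-degenerate-boundary assumption on the obstacles enters. Everything else is a direct unwinding of the definitions.
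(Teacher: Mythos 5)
Your proof is correct and takes essentially the same route as the paper, whose own proof simply observes that Definitions~\ref{def:tether_contact_with_obstacle} and~\ref{def:taut_tether_contact_with_other_tether} admit only the straight segment $l_{\x{a},x}$ as a non-entangled configuration and dismisses the rest as trivial; your version merely spells out the topological bookkeeping that the paper leaves implicit. One small correction: the boundary case you flag as delicate does not actually require the non-degenerate-boundary assumption, since $\partial\mathcal{O}\subseteq\closure(\mathbb{R}^n\setminus\mathcal{O})$ holds for any set, so together with the openness of $\mathcal{X}$ every point of $\partial\mathcal{O}\cap\mathcal{X}$ is automatically a limit of points of $\mathcal{X}\setminus\mathcal{O}$ and hence lies in $\mathcal{X}_\text{free}$.
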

For Definition \ref{def:local_looping} we show instead that all points in $\mathcal{X}_\text{free}$ are part of the non-entangled free workspace.
\begin{proposition}
	\label{prop:reachable_set_non_looping_tether}
	The non-entangled free workspace for the 2D Tether Loop around Obstacle non-entanglement definition (Definition \ref{def:local_looping}) is given by $\mathcal{N}_{\x{a}, \ref{def:local_looping}} = \mathcal{X}_\text{free}$.
\end{proposition}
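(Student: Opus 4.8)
The plan is to prove the two inclusions $\mathcal{N}_{\x{a}, \ref{def:local_looping}} \subseteq \mathcal{X}_\text{free}$ and $\mathcal{X}_\text{free} \subseteq \mathcal{N}_{\x{a}, \ref{def:local_looping}}$ separately. The first inclusion is immediate from the definitions: every tether configuration is by assumption an obstacle-free path, so its terminal point $\gamma(1)$ lies in $\mathcal{X}_\text{free}$; hence any $x$ witnessed by such a $\gamma$ belongs to $\mathcal{X}_\text{free}$, giving $\mathcal{N}_{\x{a}, \ref{def:local_looping}} \subseteq \mathcal{X}_\text{free}$.

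For the reverse inclusion I would fix an arbitrary $x \in \mathcal{X}_\text{free}$ and exhibit one concrete non-entangled tether configuration reaching it. The natural candidate is the globally shortest path $\gamma^*_{\x{a}, x}$, whose existence is guaranteed by Lemma \ref{lemma:existence_shortest_path}; it remains to verify that it satisfies Definition \ref{def:local_looping}. The key step is to show that the only self-intersections a shortest path can exhibit are degenerate. Suppose $\gamma^*_{\x{a}, x}(s_1) = \gamma^*_{\x{a}, x}(s_2)$ for some $s_1 < s_2$, and consider the shortcut $\sigma = \gamma^*_{[0, s_1]} \diamond \gamma^*_{[s_2, 1]}$ obtained by skipping the loop $\gamma^*_{[s_1, s_2]}$. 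By construction $\sigma \in \Gamma_{\x{a}, x}$ is obstacle-free, and additivity of arc length gives $\len(\sigma) = \len(\gamma^*_{\x{a}, x}) - \len(\gamma^*_{[s_1, s_2]})$. Minimality of $\gamma^*_{\x{a}, x}$ forces $\len(\sigma) \geq \len(\gamma^*_{\x{a}, x})$, hence $\len(\gamma^*_{[s_1, s_2]}) \leq 0$, i.e. $\len(\gamma^*_{[s_1, s_2]}) = 0$. Since the length of a path dominates the distance between any two of its points, a zero-length subpath has constant image; thus $\gamma^*_{[s_1, s_2]}$ is the constant map at $\gamma^*_{\x{a}, x}(s_1)$ and in particular $\gamma^*_{[s_1, s_2]} \sim \gamma^*_{\x{a}, x}(s_1)$. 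As this holds at every self-intersection, $\gamma^*_{\x{a}, x}$ is not entangled under Definition \ref{def:local_looping}, so $x \in \mathcal{N}_{\x{a}, \ref{def:local_looping}}$, which closes the argument.

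I expect the main obstacle to be the careful handling of these degenerate self-intersections: one cannot simply dismiss self-intersections outright, because a shortest path — depending on how it is parametrized — may revisit a point without enclosing any area, so the argument must show that such revisits contribute only trivial (contractible) loops. A cleaner alternative, if one wishes to avoid this parametrization bookkeeping, is to invoke the classical fact that in a Hausdorff space any two points joined by a path are joined by an arc, i.e. a simple injective path. Since $\mathcal{X}_\text{free} \subseteq \mathbb{R}^n$ is a path-connected metric (hence Hausdorff) space by assumption, such an arc exists between $\x{a}$ and $x$; having no nontrivial self-intersections, it satisfies the premise of Definition \ref{def:local_looping} only when $s_1 = s_2$, in which case $\gamma_{[s_1, s_2]} \sim \gamma(s_1)$ holds trivially. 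Either route reduces the claim to producing a single well-behaved path to each $x$, which the path-connectedness of $\mathcal{X}_\text{free}$ makes possible.
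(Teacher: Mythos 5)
Your proposal is correct and takes essentially the same route as the paper: both arguments invoke Lemma~\ref{lemma:existence_shortest_path} to obtain the shortest path $\gamma^*_{\x{a}, x}$ and then observe that it cannot contain a nontrivial loop, though your shortcut argument actually \emph{proves} this last claim (any loop in a shortest path has zero length, hence is constant), whereas the paper simply asserts that ``a shortest path never contains a loop.'' One caution on your alternative route via arc-connectedness: the injective path produced by that classical theorem need not be rectifiable, while tether configurations are required to have finite length, so the shortest-path argument is the one to keep.
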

\par
We move now to the study of the set $\mathcal{N}_{\x{a}, d}$ for the proposed non-entanglement definitions.
We start by observing that the sets $\mathcal{N}_{\x{a}, \ref{def:obstacle_free_conv_hull}}$ and $\mathcal{N}_{\x{a}, \ref{def:obstacle_free_linear_homotopy}}$ coincide, and that they are the same as the non-entangled workspace of Definitions \ref{def:tether_contact_with_obstacle} and \ref{def:taut_tether_contact_with_other_tether} that was characterized in Proposition \ref{prop:reachable_set_straight_tether}.
\begin{proposition}
	\label{prop:reachable_set_conv_hull_linear_homotopy}
	The sets $\mathcal{N}_{\x{a}, \ref{def:obstacle_free_conv_hull}}$ and $\mathcal{N}_{\x{a}, \ref{def:obstacle_free_linear_homotopy}}$ coincide, and they are equal to the set of points that can be reached from $\x{a}$ through a straight line segment that does not intersect with any obstacle, i.e., $\mathcal{N}_{\x{a}, \ref{def:obstacle_free_conv_hull}} = \mathcal{N}_{\x{a}, \ref{def:obstacle_free_linear_homotopy}} = \{ x: \exists l_{x, \x{a}}, l_{x, \x{a}} \cap\interior\mathcal{O}=\emptyset\}$.
\end{proposition}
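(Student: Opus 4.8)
The plan is to prove both equalities by a chain of set inclusions, using the straight-line segment $l_{\x{a}, x}$ as a single witness configuration in both directions. Write $S = \{ x : l_{x, \x{a}} \cap \interior\mathcal{O} = \emptyset\}$ for the target set appearing on the right-hand side. It suffices to show the four inclusions $S \subseteq \mathcal{N}_{\x{a}, \ref{def:obstacle_free_conv_hull}}$, $S \subseteq \mathcal{N}_{\x{a}, \ref{def:obstacle_free_linear_homotopy}}$, $\mathcal{N}_{\x{a}, \ref{def:obstacle_free_conv_hull}} \subseteq S$, and $\mathcal{N}_{\x{a}, \ref{def:obstacle_free_linear_homotopy}} \subseteq S$; together these give $\mathcal{N}_{\x{a}, \ref{def:obstacle_free_conv_hull}} = \mathcal{N}_{\x{a}, \ref{def:obstacle_free_linear_homotopy}} = S$. (As a sanity check, $S$ is literally the set already characterized in Proposition~\ref{prop:reachable_set_straight_tether}, so the statement also asserts that these four definitions share the same non-entangled free workspace.)

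For the inclusions $S \subseteq \mathcal{N}_{\x{a}, \ref{def:obstacle_free_conv_hull}}$ and $S \subseteq \mathcal{N}_{\x{a}, \ref{def:obstacle_free_linear_homotopy}}$, I would take any $x \in S$ and use the straight segment $\gamma = l_{\x{a}, x}$ as the tether configuration from $\x{a}$ to $x$. For Definition~\ref{def:obstacle_free_conv_hull}, the convex hull of a segment is the segment itself, so $\conv(\gamma) = l_{\x{a}, x}$, whence $\conv(\gamma) \cap \interior\mathcal{O} = \emptyset$ by the defining property of $S$. For Definition~\ref{def:obstacle_free_linear_homotopy}, each point $\gamma(s)$ lies on $l_{\x{a}, x}$, so the segment $l_{\gamma(s), \x{a}}$ is a sub-segment of $l_{\x{a}, x}$ and therefore also misses $\interior\mathcal{O}$ for every $s \in [0,1]$. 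In both cases $\gamma$ is not entangled under the respective definition, so $x$ belongs to the corresponding non-entangled free workspace.

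For the reverse inclusions, the key observations are elementary. If $x \in \mathcal{N}_{\x{a}, \ref{def:obstacle_free_conv_hull}}$, pick a witness $\gamma$ from $\x{a}$ to $x$ with $\conv(\gamma) \cap \interior\mathcal{O} = \emptyset$; since $\x{a} = \gamma(0)$ and $x = \gamma(1)$ both lie on $\gamma$, the segment $l_{\x{a}, x}$ joining them is contained in $\conv(\gamma)$, hence $l_{\x{a}, x} \cap \interior\mathcal{O} = \emptyset$ and $x \in S$. If $x \in \mathcal{N}_{\x{a}, \ref{def:obstacle_free_linear_homotopy}}$, pick a witness $\gamma$ with $l_{\gamma(s), \x{a}} \cap \interior\mathcal{O} = \emptyset$ for all $s$, and simply evaluate at $s = 1$: since $\gamma(1) = x$, this gives $l_{x, \x{a}} \cap \interior\mathcal{O} = \emptyset$, so again $x \in S$.

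The only point requiring care — and thus the main (minor) obstacle — is checking that the witness segment $l_{\x{a}, x}$ used in the forward direction is an admissible tether configuration, i.e. a finite-length \emph{obstacle-free} path lying in $\mathcal{X}_\text{free}$. Finite length is immediate. Obstacle-freeness follows because avoiding $\interior\mathcal{O}$ keeps every point of the segment in $\closure(\mathcal{X} \setminus \mathcal{O}) = \mathcal{X}_\text{free}$, using that $\mathcal{X}$ is convex (so the segment between two points of $\closure(\mathcal{X})$ stays in $\closure(\mathcal{X})$) and that points on $\partial\mathcal{O}$ are admissible under the closure-based definition of $\mathcal{X}_\text{free}$. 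I would record this verification briefly, as it is exactly the reason the closure operator was built into the definition of $\mathcal{X}_\text{free}$, and then the four inclusions above complete the argument.
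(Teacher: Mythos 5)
Your proof is correct and takes essentially the same route as the paper's: in the forward direction you use the straight segment $l_{\x{a},x}$ itself as the witness tether configuration, and in the converse direction you observe that $l_{\x{a},x}$ is contained in $\conv(\gamma)$ (for Definition~\ref{def:obstacle_free_conv_hull}) and appears as the path of the linear homotopy at $s=1$ (for Definition~\ref{def:obstacle_free_linear_homotopy}). The only addition is your explicit check that the witness segment is an admissible configuration lying in $\mathcal{X}_\text{free}$, a point the paper's proof leaves implicit.
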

This result, despite its simplicity, highlights well the strictness of the Obstacle-free Convex Hull and the Obstacle-free Linear Homotopy non-entanglement definitions (Definitions \ref{def:obstacle_free_conv_hull}  and \ref{def:obstacle_free_linear_homotopy}). In fact, Proposition \ref{prop:reachable_set_conv_hull_linear_homotopy} shows how these definitions do not allow the tether to go around obstacles, since in both definitions each point of the tether must always be in the line of sight of the anchor point. 
\par 
Moving onto the Path Homotopy to Safe Set definition (Definition \ref{def:homotopy_to_safe_set}), we observe that the set $\mathcal{N}_{\x{a}, \ref{def:homotopy_to_safe_set}}$ can be computed by extending the analysis described in Example \ref{example:safe_set_defintion} for the two points $x_{\text{r}_1}, x_{\text{r}_2}$ to all the points in $\mathcal{X}_\text{free}$. To check if a point $x_\text{r}\in\mathcal{X}_\text{free}$ belongs to $\mathcal{N}_{\x{a}, \ref{def:homotopy_to_safe_set}}$ it is necessary to determine if there exists a tether configuration $\gamma$ for which it is possible to find a path $\lambda\in\Lambda_{x_\text{r}, P}$ and a safe configuration $\bar{\gamma}\in\Gamma_{x_\text{a}}^\text{safe}$ such that $\gamma$ is relatively homotopic to $\bar{\gamma}$ along $\lambda$, i.e., $\mathcal{N}_{\x{a}, \ref{def:homotopy_to_safe_set}} = \left\{x_\text{r} : \exists \bar{\gamma}\in\Gamma_{x_\text{a}}^\text{safe}, \lambda\in\Lambda_{x_\text{r}, P} \text{ s.t. } (\bar{\gamma} \diamond \lambda^\text{reverse})(1) = x_\text{r} \right\}$.
In general, $\mathcal{N}_{\x{a}, \ref{def:homotopy_to_safe_set}}$ cannot be characterized more explicitely, as it depends on the specific choice of the sets $\Gamma_{x_\text{a}}^\text{safe}$ and $\Lambda_{x_\text{r}, P}$.
\begin{figure}[t]
	\centering
	\includegraphics[width=0.8\columnwidth]{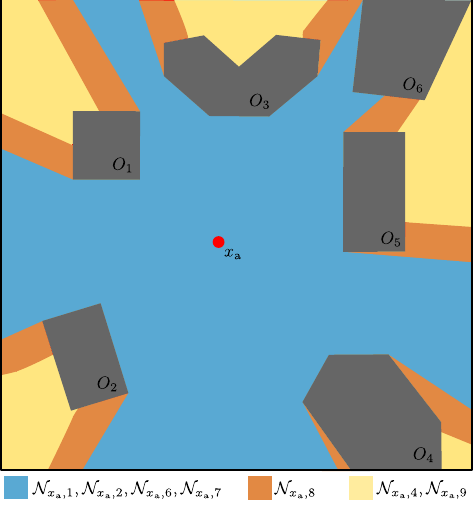}
	\caption{
		Comparison of the non-entangled free workspace $\bm{\mathcal{N}_{\x{a}, d}}$ for the Definitions \ref{def:tether_contact_with_obstacle},  \ref{def:taut_tether_contact_with_other_tether}, \ref{def:local_looping}, and \ref{def:obstacle_free_conv_hull}--\ref{def:local_visibility_homotopy}.
		All the sets are computed starting from the anchor point $\bm{\x{a}}$ shown in the middle of the image. 
		The sets $\bm{\mathcal{N}_{\x{a}, \ref{def:tether_contact_with_obstacle}}}$, $\bm{\mathcal{N}_{\x{a}, \ref{def:taut_tether_contact_with_other_tether}}}$, $\bm{\mathcal{N}_{\x{a}, \ref{def:obstacle_free_conv_hull}}}$, and $\bm{\mathcal{N}_{\x{a}, \ref{def:obstacle_free_linear_homotopy}}}$ correspond to all the points that are in an obstacle-free line of sight with $\bm{\x{a}}$.
		To compute the set $\bm{\mathcal{N}_{\x{a}, \ref{def:homotopy_to_safe_set}}}$, the set $\bm{\Gamma_{\x{a}}}^\textrm{\textbf{safe}}$ is defined as the set of tether configurations that are not entangled with respect to the Obstacle-Free Linear Homotopy definition (Definition \ref{def:obstacle_free_linear_homotopy}), and the set $\bm{\Lambda_{x_{\textrm{\textbf{r}}}, p}}$ is defined as the set of all paths such that $\textrm{\textbf{len}}\bm{(\lambda) \leq d_\textrm{\textbf{max}}}$. 
		The sets $\bm{\mathcal{N}_{\x{a}, \ref{def:local_looping}}}$ and $\bm{\mathcal{N}_{\x{a}, \ref{def:local_visibility_homotopy}}}$ cover the whole $\bm{\mathcal{X}_\textrm{\textbf{free}}}$.
	}
	\label{fig:reachable_set_example}
\end{figure}
\par
Lastly, for the Local Visibility Homotopy non-entanglement definition (Definition \ref{def:local_visibility_homotopy}) we show that, given an anchor point $\x{a}$, every point $x\in\mathcal{X}_\text{free}$ is reachable through at least one non-entangled tether configuration.
\begin{proposition}
	\label{prop:local_homotopy_reachable_set}
	The non-entangled free workspace for the Local Visibility Homotopy non-entanglement definition is given by $\mathcal{N}_{\x{a}, \ref{def:local_visibility_homotopy}} = \mathcal{X}_\text{free}$.
\end{proposition}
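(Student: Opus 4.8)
The plan is to establish the two set inclusions $\mathcal{N}_{\x{a}, \ref{def:local_visibility_homotopy}} \subseteq \mathcal{X}_\text{free}$ and $\mathcal{X}_\text{free} \subseteq \mathcal{N}_{\x{a}, \ref{def:local_visibility_homotopy}}$ separately. The first is immediate from the definitions: a tether configuration is by construction an obstacle-free path, so its terminal point lies in $\mathcal{X}_\text{free}$, whence any point admitting a non-entangled configuration already belongs to $\mathcal{X}_\text{free}$. The substance of the statement is therefore the reverse inclusion, and for this it suffices to exhibit, for each fixed $x \in \mathcal{X}_\text{free}$, one tether configuration from $\x{a}$ to $x$ that satisfies Definition \ref{def:local_visibility_homotopy}.

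My candidate is the globally shortest path $\gamma \coloneqq \gamma^*_{\x{a}, x}$, whose existence is guaranteed by Lemma \ref{lemma:existence_shortest_path}; being of finite length and obstacle-free, it is a valid tether configuration. To check the Local Visibility Homotopy condition I would fix $x_1 = \gamma(s_1)$ and $x_2 = \gamma(s_2)$ with $s_1 \leq s_2$ and $l_{x_1, x_2} \cap \interior\mathcal{O} = \emptyset$, and first record that this segment lies entirely in $\mathcal{X}_\text{free}$: by convexity of $\mathcal{X}$ it stays in $\closure(\mathcal{X})$, and each of its points is either outside $\mathcal{O}$ or on $\partial\mathcal{O}$; since $\mathcal{O}$ is closed, every boundary point is a limit of points of $\mathcal{X}\setminus\mathcal{O}$ and hence lies in $\mathcal{X}_\text{free} = \closure(\mathcal{X}\setminus\mathcal{O})$. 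Thus $l_{x_1,x_2} \in \Gamma_{x_1,x_2}$.

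The heart of the argument is a cut-and-paste length comparison. First, the restriction $\gamma_{[s_1,s_2]}$ is itself a globally shortest path between $x_1$ and $x_2$: if some obstacle-free $\sigma \in \Gamma_{x_1,x_2}$ were strictly shorter, replacing the middle piece of $\gamma$ by $\sigma$ would produce an admissible path from $\x{a}$ to $x$ of length strictly below $\len(\gamma)$, contradicting minimality. Second, in Euclidean space every path in $\Gamma_{x_1,x_2}$ has length at least $\norm{x_1 - x_2}$, with equality only for monotone reparametrizations of the segment $l_{x_1,x_2}$; since that segment is admissible by the previous paragraph, the shortest length equals $\norm{x_1-x_2}$. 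Combining the two facts gives $\len(\gamma_{[s_1,s_2]}) = \norm{x_1-x_2}$, so $\gamma_{[s_1,s_2]}$ is a monotone reparametrization of $l_{x_1,x_2}$, and reparametrizations with common endpoints are path-homotopic (the straight-line homotopy in the parameter stays on the shared image, which lies in $\mathcal{X}_\text{free}$). Hence $\gamma_{[s_1,s_2]} \sim l_{x_1,x_2}$, i.e. \eqref{eq:local_visibility_homotopy} holds, and $\gamma$ is not entangled.

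The step I expect to demand the most care is the equality case of the Euclidean length bound together with the claim that a length-minimizing curve is a monotone reparametrization of the segment: one must rule out backtracking and confirm that the image is exactly the segment. This is standard for rectifiable curves in $\mathbb{R}^n$ (it follows from the equality case of the triangle inequality applied to a fine partition of the parameter interval), but it is the one place where a genuinely metric-geometric argument is needed; everything else is the cut-and-paste bookkeeping above. Incidentally, the same globally-shortest-path construction recovers Proposition \ref{prop:reachable_set_non_looping_tether}, since a monotonically traversed shortest path contains no nontrivial loops.
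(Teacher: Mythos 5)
Your proof is correct and takes essentially the same approach as the paper: the paper likewise uses the globally shortest path from Lemma \ref{lemma:existence_shortest_path} as the witness configuration and shows (Lemma \ref{lemma:local_homotopy}) that it satisfies Definition \ref{def:local_visibility_homotopy} via the same cut-and-paste length comparison, phrased there as a proof by contradiction rather than directly. Your explicit treatment of the equality case (the subpath being a monotone reparametrization of the segment) and of the segment's admissibility in $\mathcal{X}_\text{free}$ only spells out details the paper leaves implicit.
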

\par
To conclude this section, we compare the non-entangled free workspace for the different definitions. 
In the comparison below, for the Path Homotopy to Safe Set (Definition \ref{def:homotopy_to_safe_set}) we define the set $\Gamma_{x_\text{a}}^\text{safe}$ as the set of tether configurations starting from $x_\text{a}$ that are not entangled with respect to the Obstacle-Free Linear Homotopy definition (Definition \ref{def:obstacle_free_linear_homotopy}), and the set $\Lambda_{x_\text{r}, \mathrm{maxlen}}$ is defined as the set of all paths starting from $x_\text{r}$ having length less than or equal to some $d_\text{max}$.
We obtain then the following relations
\begin{equation*}
	\mathcal{N}_{\x{a}, \ref{def:tether_contact_with_obstacle}} =
	\mathcal{N}_{\x{a}, \ref{def:taut_tether_contact_with_other_tether}} =
	\mathcal{N}_{\x{a}, \ref{def:obstacle_free_conv_hull}} = \mathcal{N}_{\x{a}, \ref{def:obstacle_free_linear_homotopy}} \subseteq \mathcal{N}_{\x{a}, \ref{def:homotopy_to_safe_set}} \subseteq \mathcal{N}_{\x{a}, \ref{def:local_looping}} =
	\mathcal{N}_{\x{a}, \ref{def:local_visibility_homotopy}}.
\end{equation*}
We also observe that, in case of an obstacle-free workspace all the sets coincide. 
An example of the non-entangled workspace for the different definitions is shown in Figure \ref{fig:reachable_set_example}.

\section{Relationships between the definitions}
\label{sec:relationships}
In Section \ref{sec:definitions} a broad set of non-entanglement definitions has been introduced and discussed. There exist a number of relationships between the different definitions.
We are specifically interested in determining if a definition is a special case of another one, i.e., if being entangled with respect to one definition implies being entangled also with respect to another one.
It is worth noting that some of the definitions can only be applied under specific environment and tether conditions.
These conditions are the following:
\begin{enumerate}[left=0.5cm, label={}]
	\item \textbf{C1.} Taut tether configuration;
	\item \textbf{C2.} Multi-robot system;
	\item \textbf{C3.} Obstacle-free environment;
	\item \textbf{C4.} 2D environment;
	\item \textbf{C5.} Closed tether configuration (coinciding endpoints).
\end{enumerate}
The definitions that require specific types of tether configurations and environments are summarized below:
\renewcommand{\arraystretch}{1.5}
\begin{itemize}[left=0.5cm, label={}]
	\item \textbf{Definition \ref{def:tether_contact_with_obstacle}:} C1
	\item \textbf{Definition \ref{def:taut_tether_contact_with_other_tether}:} C1, C2, C3
	\item \textbf{Definition \ref{def:risk_of_entanglement}:} C2
	\item \textbf{Definition \ref{def:local_looping}:} C4
	\item \textbf{Definition \ref{def:closed_tether_homotopy_to_constant_map}:} C5
\end{itemize}
\renewcommand{\arraystretch}{1}
Definitions not listed here can handle generic tether configurations and environments. 
\par
Table \ref{tab:definition comparison} summarizes all the relationships between the different definitions.
\begin{table*}[!ht]
	\centering
	\caption{Comparison of the Entanglement Definitions.}
	\begin{tabular}{|c|>{\centering}m{0.5cm}>{\centering}m{0.55cm}>{\centering}m{0.55cm}>{\centering}m{0.55cm}>{\centering}m{0.55cm}>{\centering}m{0.5cm}>{\centering}m{0.5cm}>{\centering}m{0.5cm}>{\centering}m{0.5cm}>{\centering}m{0.7cm}>{\centering}m{0.9cm}c|}
		\hline
		& \rotatebox{75}{\begin{Tabular}[1]{@{}c@{}} \ref{def:tether_contact_with_obstacle}. Taut Tether Contact \\ with Obstacle (C1) \end{Tabular}} 
		& \rotatebox{75}{\begin{Tabular}[1]{@{}c@{}} \ref{def:taut_tether_contact_with_other_tether}. Taut Tether Contact \\ with Other Tethers \\ (C1, C2, C3) \end{Tabular}} 
		& \rotatebox{75}{\begin{Tabular}[1]{@{}c@{}} \ref{def:risk_of_entanglement}. Entanglement between \\ Slack Tethers (C2)  \end{Tabular}} 
		& \rotatebox{75}{\begin{Tabular}[1]{@{}c@{}} \ref{def:local_looping}. 2D Tether Loop \\ around Obstacle (C4)  \end{Tabular}} 
		& \rotatebox{75}{\begin{Tabular}[1]{@{}c@{}} \ref{def:closed_tether_homotopy_to_constant_map}. Closed Tether Homotopy \\ to Constant Map (C5)  \end{Tabular}} 
		& \rotatebox{75}{\begin{Tabular}[1]{@{}c@{}} \ref{def:obstacle_free_conv_hull}. Obstacle-Free \\ Convex Hull \end{Tabular}} 
		& \rotatebox{75}{\begin{Tabular}[1]{@{}c@{}} \ref{def:obstacle_free_linear_homotopy}. Obstacle-Free \\ Linear Homotopy \end{Tabular}} 
		& \rotatebox{75}{\begin{Tabular}[1]{@{}c@{}} \ref{def:homotopy_to_safe_set}. Path Homotopy \\ to Safe Set \end{Tabular}} 
		& \rotatebox{75}{\begin{Tabular}[1]{@{}c@{}} \ref{def:local_visibility_homotopy}. Local Visibility \\ Homotopy \end{Tabular}} 
		& \rotatebox{75}{
			\begin{Tabular}[1]{@{}c@{}} \ref{def:delta_relaxed_definition}. $\delta$-Relaxed \\ Non-Entanglement \\ Definition $d$ \end{Tabular}}
		& \rotatebox{75}{\begin{Tabular}[1]{@{}c@{}} \ref{def:path_class_relaxed_local_visibility_homotopy}. Path Class Robustified \\ Local Visibility Homotopy \end{Tabular}} & \\
		\hline
		\begin{Tabular}[1]{@{}c@{}} \ref{def:tether_contact_with_obstacle}. Taut Tether Contact \TstrutSmall \\ with Obstacle (C1)\BstrutSmall \end{Tabular} & X & & & X & & X & X & & X & & X & \\
		\begin{Tabular}[1]{@{}c@{}} \ref{def:taut_tether_contact_with_other_tether}. Taut Tether Contact \TstrutSmall \\ with Other Tethers \\ (C1, C2, C3)\BstrutSmall \end{Tabular} & X & X & X & X & X & X & X & & X & & X & \\
		\begin{Tabular}[1]{@{}c@{}} \ref{def:risk_of_entanglement}. Entanglement between \TstrutSmall \\ Slack Tethers (C2)\BstrutSmall \end{Tabular} & & & X & & & & & & & & & \\
		\begin{Tabular}[1]{@{}c@{}} \ref{def:local_looping}. 2D Tether Loop\TstrutSmall \\ around Obstacle (C4)\BstrutSmall \end{Tabular} & & & & X & X & & & & & & X\textsuperscript{(C5)} & \\
		\begin{Tabular}[1]{@{}c@{}} \ref{def:closed_tether_homotopy_to_constant_map}. Closed Tether Homotopy \TstrutSmall \\ to Constant Map (C5)\BstrutSmall \end{Tabular} & & & & X & X & & & & & & X & \\
		\begin{Tabular}[1]{@{}c@{}} \ref{def:obstacle_free_conv_hull}. Obstacle-Free\TstrutSmall \\ Convex Hull\BstrutSmall \end{Tabular} & X & & & X & X & X & X & & X & & X & \\
		\begin{Tabular}[1]{@{}c@{}} \ref{def:obstacle_free_linear_homotopy}. Obstacle-Free\TstrutSmall \\ Linear Homotopy\BstrutSmall \end{Tabular} & & & & X & X & & X & & & & X & \\
		\begin{Tabular}[1]{@{}c@{}} \ref{def:homotopy_to_safe_set}.Path Homotopy \TstrutSmall \\ to Safe Set \BstrutSmall \end{Tabular} & & & & & & & & X & & & & \\
		\begin{Tabular}[1]{@{}c@{}} \ref{def:local_visibility_homotopy}. Local Visibility\TstrutSmall \\ Homotopy\BstrutSmall \end{Tabular} & & & & X & X & & & & X & & X & \\
		\begin{Tabular}[1]{@{}c@{}} \ref{def:delta_relaxed_definition}. $\delta$-Relaxed \TstrutSmall \\ \BstrutSmall Non-Entanglement \TstrutSmall \\ Definition $d$ \BstrutSmall \end{Tabular} & & & & & & & & & & X & & \\
		\begin{Tabular}[1]{@{}c@{}} \ref{def:path_class_relaxed_local_visibility_homotopy}. Path Class Robustified \TstrutSmall \\ Local Visibility Homotopy\BstrutSmall \end{Tabular} & & & & & & & & & & & X & \\
		\hline
	\end{tabular}
	\label{tab:definition comparison}
\end{table*}
In the table, a cross indicates that being not entangled with respect to the definition on a given row implies being not entangled also with respect to the definition in the corresponding column (and, conversely, that being entangled with respect to the definition in a given column implies being entangled also with respect to the definition in the corresponding row). 
When a relationship exists between two definitions, we assume that the conditions required by the two definitions, which are indicated in Table \ref{tab:definition comparison} next to the number of each definition, are simultaneously satisfied.
Some of the crosses have additional conditions indicated by a superscript, which means that the relationship between the two definitions is true if those additional conditions hold.
The proofs of the relationships listed in Table \ref{tab:definition comparison} are reported in Appendix \ref{appendix:proofs_comparison}.
\par
From the analysis of Table \ref{tab:definition comparison} it is possible to gain an intuition of which non-entanglement definitions are more strict and which are less so in identifying a tether configuration as non-entangled. 
The definitions that imply non-entanglement also according to many other definitions (i.e., those whose corresponding row contain many crosses) are typically more strict. 
For instance, Definition \ref{def:tether_contact_with_obstacle} considers a tether configuration as non-entangled only if it coincides with a straight line segment, which is also considered to be a non-entangled configuration by most of the other definitions. 
A similar argument also holds for Definitions \ref{def:taut_tether_contact_with_other_tether} and \ref{def:obstacle_free_conv_hull}. 
On the contrary, definitions according to which non-entanglement is also implied by many others are usually more general. This is the case of Definitions \ref{def:local_looping} and \ref{def:closed_tether_homotopy_to_constant_map}, which generalize most of the other definitions in their specific cases of application (respectively, 2D environments and closed tether configurations). Definition \ref{def:path_class_relaxed_local_visibility_homotopy} also turns out to generalize most of the other definitions.
\par
We remark that the way in which the Path to Safe Homotopy definition (Definition \ref{def:homotopy_to_safe_set}) is related to other definitions depends on the specific choice of $\Gamma_{x_\text{a}}^\text{safe}$ and $\Lambda_{x_\text{r}, P}$. For this reason, no relationship has been marked in the table for Definition \ref{def:homotopy_to_safe_set}. The same holds for the $\delta$-Relaxed Non-Entanglement definition (Definition \ref{def:delta_relaxed_definition}), as the relations of Definition \ref{def:delta_relaxed_definition} depend on the specific choice of $\delta$ and $d$.

\section{Empirical validation}
\label{sec:validation}
The non-entanglement definitions  presented in this article are intended to be applied in tethered robot systems to characterize the entanglement state of the tether. However, the definitions are not straightforward to validate since, as already discussed, there is not a well-established and generally accepted definition of entanglement to compare them with. 
For this reason, we have opted for a qualitative validation of the definitions by experts in the field of tethered robotics. A total of 12 experts from the field of tethered robotics have been asked to evaluate a set of test scenarios.
Each scenario is composed by a set of obstacles, an anchor point $x_\text{a}$, a robot location $x_\text{r}$, and a tether configuration $\gamma$. The considered scenarios include both 2D and 3D environments, single-robot and multi-robot systems, and both loose and taut tether configurations. Three examples of test scenarios are shown in Figure \ref{fig:example_scenarios}. The full list of validation scenarios can be found in the supplemental material to this article.
\begin{figure*}[!t]
	\centering
	\hfill
	\subfloat[\label{fig:example_scenarios_1}][]{
		\includegraphics[width=0.3\textwidth]{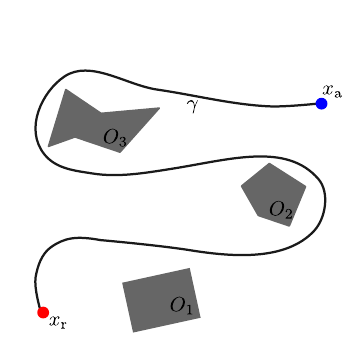}
	}
	\hfill
	\subfloat[\label{fig:example_scenarios_2}][]{
		\includegraphics[width=0.3\textwidth]{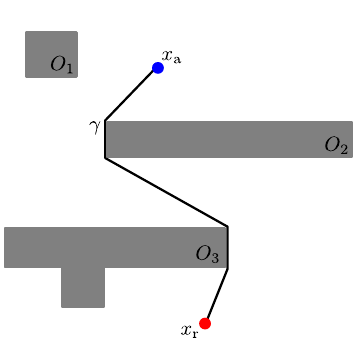}
	}
	\hfill
	\subfloat[\label{fig:example_scenarios_3}][]{
		\includegraphics[width=0.3\textwidth]{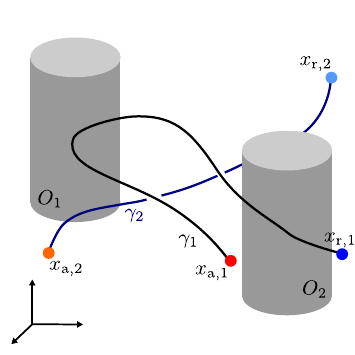}
	}
	\hfill\phantom{.}
	\caption{Three examples of validation scenarios used for the empirical validation of the binary non-entanglement definitions. In the figures, gray regions $\bm{O_i}$ represent the obstacles, the blue dot $\bm{\x{a}}$ indicates the anchor point location, the red dot $\bm{\x{r}}$ indicates the robot location, and the black curve $\bm{\gamma}$ represents the tether configuration under analysis. Scenarios (a) and (b) are 2D and single-robot. Scenario (c) is 3D and multi-robot.}
	\label{fig:example_scenarios}
\end{figure*}
\par
All the non-entanglement definitions discussed in this article are applied to each scenario to determine the entanglement state of the tether.\footnote{For Definition \ref{def:homotopy_to_safe_set} the set $\Gamma_{x_\text{a}}^\text{safe}$ is defined as the set of all the tether configurations that are not entangled according to Definition \ref{def:obstacle_free_conv_hull}, while $\Lambda_{x_\text{r}, \mathrm{maxlen}}$ is defined as the set of all obstacle-free straight paths starting from $x_\text{r}$ and having length less than or equal to some $d_\text{max}$, as done e.g. in Example \ref{example:safe_set_defintion}. In 3D multi-robot scenarios, Definitions \ref{def:local_visibility_homotopy} and \ref{def:path_class_relaxed_local_visibility_homotopy} have been considered under the requirement that $\bar{\mathcal{F}}(\gamma_{[s_1, s_2]}, l_{x_1, x_2})\leq\len(\gamma_{[s_1, s_2]})$, which was discussed at the end of Section \ref{subsec:proposed_definitions}.}
The experts performed the same operation by indicating, for each scenario, if they would consider the tether configuration to be entangled or not.
In addition to indicating if a tether configuration is entangled or not, the experts had the possibility to indicate the extent on the entanglement in that given scenario. The four possible answers that the experts could select are `N' (not entangled), `W' (weakly entangled), `E' (entangled), and `S' (strongly entangled).
The non-entanglement definitions can produce instead three different outcomes: `N' (not entangled), `E' (entangled), and `--' (definition not applicable, e.g., when the conditions required by a certain definition are not satisfied in the scenario under analysis).
The results of the validation process are shown in Table \ref{tab:definitions_validation}.
\renewcommand{\arraystretch}{1}
\setlength\heavyrulewidth{0.2ex}
\begin{table*}[!ht]
	\centering
	\caption{Validation of the entanglement definitions. N: not entangled; W: weakly entangled; E: entangled; S: strongly entangled; --: not applicable.}
	\label{tab:definitions_validation}
	\vspace{-0.3cm}
	\setlength\heavyrulewidth{0.25ex}

\definecolor{back_N}{rgb}{1,  .78,  .808}
\definecolor{text_N}{rgb}{.612,  0,  .024}
\definecolor{back_W}{rgb}{.886,  .937,  .855}
\definecolor{text_W}{rgb}{.216,  .337,  .137}
\definecolor{back_E}{rgb}{.776,  .878,  .706}
\definecolor{text_E}{rgb}{.216,  .337,  .137}
\definecolor{back_S}{rgb}{.663,  .816,  .557}
\definecolor{text_S}{rgb}{.216,  .337,  .137}
\definecolor{back_NA}{rgb}{1,  1,  1}		
\definecolor{text_NA}{rgb}{.612,  .341,  0}

\newcommand{\answerN}{\cellcolor{back_N}\textcolor{text_N}{N}}
\newcommand{\answerW}{\cellcolor{back_W}\textcolor{text_W}{W}}
\newcommand{\answerE}{\cellcolor{back_E}\textcolor{text_E}{E}}
\newcommand{\answerS}{\cellcolor{back_S}\textcolor{text_S}{S}}
\newcommand{\answerNA}{\cellcolor{back_NA}\textcolor{text_NA}{--}}

\begin{tabular}{ccccccccccccccccccccccccc}
	\toprule
	\textbf{Scenario} & \multicolumn{12}{c}{\textbf{Expert}} & & \multicolumn{10}{c}{\textbf{Definition}} & \\
	& \textbf{1} & \textbf{2} & \textbf{3} & \textbf{4} & \textbf{5} & \textbf{6} & \textbf{7} & \textbf{8} & \textbf{9} & \textbf{10} & \textbf{11} & \textbf{12} & & 
	\ref{def:tether_contact_with_obstacle} &
	\ref{def:taut_tether_contact_with_other_tether} & 
	\ref{def:risk_of_entanglement} & 
	\ref{def:local_looping} &
	\ref{def:closed_tether_homotopy_to_constant_map} &
	\ref{def:obstacle_free_conv_hull} &
	\ref{def:obstacle_free_linear_homotopy} & 
	\ref{def:homotopy_to_safe_set} & 
	\ref{def:local_visibility_homotopy} & 
	\ref{def:path_class_relaxed_local_visibility_homotopy} &  \\
	\midrule
	\textbf{A1} & \answerN & \answerN & \answerN & \answerN & \answerN & \answerN & \answerN & \answerN & \answerN & \answerN & \answerN & \answerN &       & \answerNA & \answerNA & \answerNA & \answerN & \answerNA & \answerE & \answerE & \answerN & \answerE & \answerN &  \\
	\textbf{A2} & \answerN & \answerN & \answerN & \answerN & \answerN & \answerN & \answerW & \answerN & \answerN & \answerN & \answerN & \answerN &       & \answerNA & \answerNA & \answerNA & \answerN & \answerNA & \answerE & \answerE & \answerN & \answerN & \answerN &  \\
	\textbf{A3} & \answerN & \answerN & \answerN & \answerN & \answerE & \answerN & \answerN & \answerN & \answerN & \answerE & \answerN & \answerN &       & \answerNA & \answerNA & \answerNA & \answerN & \answerNA & \answerE & \answerE & \answerN & \answerN & \answerN &  \\
	\textbf{A4} & \answerN & \answerN & \answerN & \answerN & \answerN & \answerN & \answerN & \answerN & \answerN & \answerN & \answerN & \answerN &       & \answerNA & \answerNA & \answerNA & \answerN & \answerNA & \answerE & \answerE & \answerN & \answerN & \answerN &  \\
	\textbf{B1} & \answerN & \answerN & \answerN & \answerN & \answerN & \answerN & \answerE & \answerN & \answerN & \answerN & \answerN & \answerN &       & \answerNA & \answerNA & \answerNA & \answerN & \answerNA & \answerE & \answerE & \answerN & \answerN & \answerN &  \\
	\textbf{B2} & \answerN & \answerN & \answerN & \answerN & \answerN & \answerN & \answerW & \answerN & \answerN & \answerN & \answerN & \answerN &       & \answerNA & \answerNA & \answerNA & \answerN & \answerNA & \answerE & \answerE & \answerN & \answerN & \answerN &  \\
	\textbf{B3} & \answerN & \answerN & \answerN & \answerN & \answerN & \answerN & \answerN & \answerN & \answerN & \answerN & \answerN & \answerN &       & \answerNA & \answerNA & \answerNA & \answerN & \answerNA & \answerE & \answerE & \answerE & \answerE & \answerE &  \\
	\textbf{B4} & \answerN & \answerN & \answerN & \answerN & \answerN & \answerN & \answerN & \answerN & \answerN & \answerN & \answerN & \answerW &       & \answerNA & \answerNA & \answerNA & \answerN & \answerNA & \answerE & \answerE & \answerN & \answerE & \answerN &  \\
	\textbf{B5} & \answerN & \answerN & \answerN & \answerN & \answerN & \answerN & \answerN & \answerN & \answerN & \answerN & \answerN & \answerN &       & \answerNA & \answerNA & \answerNA & \answerN & \answerNA & \answerE & \answerE & \answerN & \answerE & \answerN &  \\
	\textbf{B6} & \answerN & \answerN & \answerN & \answerN & \answerN & \answerN & \answerN & \answerN & \answerN & \answerN & \answerN & \answerW &       & \answerNA & \answerNA & \answerNA & \answerN & \answerNA & \answerE & \answerE & \answerN & \answerN & \answerN &  \\
	\textbf{C1} & \answerN & \answerN & \answerW & \answerN & \answerN & \answerN & \answerE & \answerW & \answerN & \answerE & \answerW & \answerW &       & \answerNA & \answerNA & \answerNA & \answerN & \answerNA & \answerE & \answerE & \answerE & \answerE & \answerE &  \\
	\textbf{C2} & \answerE & \answerW & \answerE & \answerW & \answerS & \answerW & \answerS & \answerE & \answerE & \answerS & \answerE & \answerS &       & \answerNA & \answerNA & \answerNA & \answerE & \answerNA & \answerE & \answerE & \answerE & \answerE & \answerE &  \\
	\textbf{D1} & \answerW & \answerN & \answerS & \answerN & \answerE & \answerN & \answerS & \answerW & \answerW & \answerE & \answerE & \answerW &       & \answerE & \answerNA & \answerNA & \answerN & \answerNA & \answerE & \answerE & \answerE & \answerN & \answerN &  \\
	\textbf{D2} & \answerN & \answerN & \answerN & \answerN & \answerN & \answerN & \answerW & \answerN & \answerN & \answerW & \answerN & \answerN &       & \answerNA & \answerNA & \answerNA & \answerN & \answerNA & \answerE & \answerE & \answerE & \answerN & \answerN &  \\
	\textbf{D3} & \answerE & \answerN & \answerW & \answerN & \answerW & \answerN & \answerS & \answerW & \answerW & \answerE & \answerE & \answerW &       & \answerE & \answerNA & \answerNA & \answerN & \answerNA & \answerE & \answerE & \answerE & \answerN & \answerN &  \\
	\textbf{D4} & \answerE & \answerN & \answerW & \answerN & \answerW & \answerN & \answerS & \answerW & \answerW & \answerE & \answerE & \answerW &       & \answerE & \answerNA & \answerNA & \answerN & \answerNA & \answerE & \answerE & \answerE & \answerN & \answerN &  \\
	\textbf{E1} & \answerN & \answerN & \answerN & \answerN & \answerN & \answerN & \answerW & \answerN & \answerN & \answerN & \answerN & \answerN &       & \answerNA & \answerNA & \answerNA & \answerN & \answerNA & \answerE & \answerE & \answerN & \answerN & \answerN &  \\
	\textbf{E2} & \answerE & \answerN & \answerW & \answerN & \answerW & \answerN & \answerS & \answerW & \answerW & \answerE & \answerE & \answerW &       & \answerE & \answerNA & \answerNA & \answerN & \answerNA & \answerE & \answerE & \answerE & \answerE & \answerE &  \\
	\textbf{E3} & \answerN & \answerN & \answerN & \answerN & \answerN & \answerN & \answerW & \answerN & \answerN & \answerN & \answerN & \answerN &       & \answerNA & \answerNA & \answerNA & \answerN & \answerNA & \answerE & \answerE & \answerE & \answerE & \answerE &  \\
	\textbf{E4} & \answerN & \answerN & \answerN & \answerN & \answerE & \answerN & \answerN & \answerN & \answerN & \answerE & \answerN & \answerN &       & \answerNA & \answerNA & \answerNA & \answerN & \answerNA & \answerE & \answerE & \answerN & \answerN & \answerN &  \\
	\textbf{F1} & \answerN & \answerN & \answerN & \answerN & \answerN & \answerN & \answerN & \answerN & \answerN & \answerN & \answerN & \answerN &       & \answerNA & \answerNA & \answerNA & \answerN & \answerNA & \answerE & \answerE & \answerN & \answerN & \answerN &  \\
	\textbf{F2} & \answerN & \answerN & \answerN & \answerN & \answerN & \answerN & \answerN & \answerN & \answerN & \answerN & \answerN & \answerN &       & \answerNA & \answerNA & \answerNA & \answerN & \answerNA & \answerE & \answerE & \answerN & \answerE & \answerN &  \\
	\textbf{F3} & \answerN & \answerN & \answerN & \answerN & \answerN & \answerN & \answerN & \answerN & \answerN & \answerN & \answerN & \answerN &       & \answerNA & \answerNA & \answerNA & \answerN & \answerNA & \answerE & \answerE & \answerN & \answerN & \answerN &  \\
	\textbf{F4} & \answerN & \answerN & \answerN & \answerN & \answerN & \answerN & \answerW & \answerN & \answerN & \answerW & \answerN & \answerN &       & \answerNA & \answerNA & \answerNA & \answerN & \answerNA & \answerE & \answerE & \answerE & \answerE & \answerE &  \\
	\textbf{G1} & \answerN & \answerN & \answerN & \answerN & \answerN & \answerN & \answerN & \answerN & \answerN & \answerE & \answerW & \answerW &       & \answerNA & \answerNA & \answerNA & \answerN & \answerNA & \answerE & \answerE & \answerE & \answerN & \answerN &  \\
	\textbf{G2} & \answerN & \answerN & \answerN & \answerN & \answerE & \answerN & \answerN & \answerN & \answerN & \answerE & \answerW & \answerW &       & \answerNA & \answerNA & \answerNA & \answerN & \answerNA & \answerE & \answerE & \answerE & \answerN & \answerN &  \\
	\textbf{H1} & \answerE & \answerN & \answerE & \answerN & \answerW & \answerN & \answerS & \answerW & \answerW & \answerE & \answerE & \answerW &       & \answerE & \answerNA & \answerNA & \answerN & \answerNA & \answerE & \answerE & \answerE & \answerN & \answerN &  \\
	\textbf{H2} & \answerN & \answerN & \answerN & \answerN & \answerN & \answerN & \answerE & \answerN & \answerN & \answerN & \answerN & \answerN &       & \answerNA & \answerNA & \answerNA & \answerN & \answerNA & \answerE & \answerE & \answerE & \answerE & \answerN &  \\
	\textbf{I1} & \answerN & \answerN & \answerN & \answerN & \answerN & \answerN & \answerW & \answerN & \answerN & \answerN & \answerN & \answerN &       & \answerNA & \answerNA & \answerNA & \answerN & \answerNA & \answerE & \answerE & \answerE & \answerE & \answerE &  \\
	\textbf{I2} & \answerN & \answerN & \answerN & \answerN & \answerN & \answerN & \answerN & \answerN & \answerN & \answerN & \answerN & \answerN &       & \answerNA & \answerNA & \answerNA & \answerN & \answerNA & \answerE & \answerE & \answerE & \answerE & \answerE &  \\
	\textbf{I3} & \answerE & \answerN & \answerN & \answerN & \answerW & \answerN & \answerS & \answerN & \answerN & \answerE & \answerN & \answerE &       & \answerNA & \answerNA & \answerNA & \answerN & \answerNA & \answerE & \answerE & \answerE & \answerE & \answerE &  \\
	\textbf{I4} & \answerN & \answerN & \answerN & \answerN & \answerN & \answerN & \answerN & \answerN & \answerN & \answerN & \answerN & \answerN &       & \answerNA & \answerNA & \answerNA & \answerN & \answerNA & \answerN & \answerN & \answerN & \answerN & \answerN &  \\
	\textbf{I5} & \answerN & \answerN & \answerN & \answerN & \answerN & \answerN & \answerN & \answerN & \answerN & \answerN & \answerN & \answerN &       & \answerNA & \answerNA & \answerNA & \answerN & \answerNA & \answerE & \answerE & \answerN & \answerN & \answerN &  \\
	\textbf{I6} & \answerN & \answerN & \answerW & \answerN & \answerW & \answerN & \answerE & \answerW & \answerN & \answerW & \answerW & \answerW &       & \answerNA & \answerNA & \answerNA & \answerN & \answerNA & \answerE & \answerE & \answerN & \answerE & \answerE &  \\
	\textbf{L1} & \answerN & \answerN & \answerN & \answerN & \answerN & \answerN & \answerN & \answerN & \answerN & \answerN & \answerN & \answerN &       & \answerNA & \answerNA & \answerN & \answerN & \answerNA & \answerE & \answerE & \answerE & \answerN & \answerN &  \\
	\textbf{L2} & \answerN & \answerN & \answerN & \answerN & \answerN & \answerN & \answerN & \answerN & \answerN & \answerN & \answerN & \answerN &       & \answerNA & \answerNA & \answerN & \answerN & \answerNA & \answerE & \answerN & \answerN & \answerN & \answerN &  \\
	\textbf{L3} & \answerN & \answerN & \answerN & \answerN & \answerN & \answerN & \answerN & \answerN & \answerN & \answerN & \answerN & \answerN &       & \answerNA & \answerNA & \answerN & \answerN & \answerNA & \answerE & \answerE & \answerN & \answerN & \answerN &  \\
	\textbf{L4} & \answerN & \answerN & \answerN & \answerN & \answerE & \answerN & \answerN & \answerN & \answerN & \answerE & \answerN & \answerW &       & \answerNA & \answerNA & \answerE & \answerN & \answerNA & \answerE & \answerE & \answerE & \answerE & \answerE &  \\
	\textbf{M1} & \answerW & \answerN & \answerN & \answerN & \answerW & \answerN & \answerN & \answerW & \answerN & \answerN & \answerN & \answerW &       & \answerNA & \answerNA & \answerN & \answerNA & \answerNA & \answerE & \answerE & \answerN & \answerE & \answerN &  \\
	\textbf{M2} & \answerN & \answerN & \answerN & \answerN & \answerE & \answerN & \answerN & \answerN & \answerN & \answerN & \answerN & \answerN &       & \answerNA & \answerNA & \answerN & \answerNA & \answerNA & \answerN & \answerN & \answerN & \answerN & \answerN &  \\
	\textbf{M3} & \answerN & \answerN & \answerE & \answerN & \answerS & \answerW & \answerE & \answerE & \answerW & \answerW & \answerE & \answerE &       & \answerNA & \answerNA & \answerE & \answerNA & \answerNA & \answerE & \answerE & \answerE & \answerE & \answerE &  \\
	\textbf{M4} & \answerE & \answerN & \answerW & \answerN & \answerS & \answerW & \answerE & \answerE & \answerW & \answerW & \answerE & \answerE &       & \answerNA & \answerNA & \answerE & \answerNA & \answerNA & \answerE & \answerE & \answerE & \answerE & \answerE &  \\
	\textbf{M5} & \answerE & \answerN & \answerN & \answerN & \answerS & \answerN & \answerN & \answerW & \answerN & \answerN & \answerN & \answerW &       & \answerNA & \answerNA & \answerE & \answerNA & \answerNA & \answerN & \answerN & \answerN & \answerN & \answerN &  \\
	\textbf{M6} & \answerE & \answerN & \answerW & \answerW & \answerS & \answerW & \answerE & \answerE & \answerW & \answerE & \answerE & \answerE &       & \answerNA & \answerNA & \answerE & \answerNA & \answerNA & \answerE & \answerE & \answerE & \answerE & \answerE &  \\
	\textbf{N1} & \answerN & \answerW & \answerN & \answerN & \answerN & \answerN & \answerE & \answerN & \answerN & \answerN & \answerN & \answerN &       & \answerNA & \answerNA & \answerNA & \answerN & \answerNA & \answerE & \answerE & \answerE & \answerN & \answerN &  \\
	\textbf{N2} & \answerN & \answerW & \answerN & \answerN & \answerN & \answerN & \answerS & \answerW & \answerN & \answerE & \answerW & \answerW &       & \answerNA & \answerNA & \answerNA & \answerN & \answerNA & \answerE & \answerE & \answerE & \answerN & \answerN &  \\
	\textbf{N3} & \answerN & \answerN & \answerN & \answerN & \answerW & \answerW & \answerE & \answerW & \answerN & \answerE & \answerW & \answerW &       & \answerE & \answerNA & \answerNA & \answerN & \answerNA & \answerE & \answerE & \answerE & \answerN & \answerN &  \\
	\textbf{N4} & \answerE & \answerN & \answerW & \answerN & \answerW & \answerW & \answerS & \answerW & \answerW & \answerE & \answerW & \answerW &       & \answerE & \answerNA & \answerNA & \answerN & \answerNA & \answerE & \answerE & \answerE & \answerE & \answerE &  \\
	\textbf{N5} & \answerN & \answerE & \answerN & \answerN & \answerN & \answerN & \answerE & \answerN & \answerN & \answerN & \answerN & \answerW &       & \answerNA & \answerNA & \answerNA & \answerN & \answerNA & \answerE & \answerE & \answerE & \answerE & \answerE &  \\
	\textbf{N6} & \answerN & \answerE & \answerW & \answerN & \answerE & \answerE & \answerS & \answerE & \answerE & \answerS & \answerE & \answerW &       & \answerE & \answerNA & \answerNA & \answerN & \answerNA & \answerE & \answerE & \answerE & \answerN & \answerN &  \\
	\textbf{N7} & \answerN & \answerN & \answerW & \answerN & \answerW & \answerW & \answerS & \answerE & \answerW & \answerW & \answerE & \answerN &       & \answerE & \answerNA & \answerNA & \answerN & \answerNA & \answerE & \answerE & \answerN & \answerN & \answerN &  \\
	\textbf{N8} & \answerE & \answerS & \answerE & \answerW & \answerS & \answerE & \answerS & \answerE & \answerN & \answerS & \answerS & \answerE &       & \answerNA & \answerNA & \answerNA & \answerN & \answerNA & \answerE & \answerE & \answerE & \answerE & \answerE &  \\
	\textbf{N9} & \answerN & \answerN & \answerN & \answerN & \answerN & \answerN & \answerN & \answerN & \answerN & \answerN & \answerW & \answerN &       & \answerNA & \answerNA & \answerNA & \answerN & \answerNA & \answerN & \answerN & \answerE & \answerN & \answerN &  \\
	\textbf{N10} & \answerN & \answerN & \answerN & \answerN & \answerN & \answerW & \answerW & \answerW & \answerN & \answerN & \answerN & \answerW &       & \answerNA & \answerNA & \answerNA & \answerN & \answerNA & \answerE & \answerE & \answerE & \answerE & \answerN &  \\
	\textbf{O1} & \answerN & \answerN & \answerN & \answerN & \answerN & \answerN & \answerN & \answerN & \answerN & \answerN & \answerN & \answerN &       & \answerNA & \answerNA & \answerNA & \answerN & \answerN & \answerN & \answerN & \answerN & \answerN & \answerN &  \\
	\textbf{O2} & \answerE & \answerW & \answerW & \answerS & \answerS & \answerE & \answerS & \answerE & \answerN & \answerS & \answerE & \answerE &       & \answerNA & \answerNA & \answerNA & \answerE & \answerE & \answerE & \answerE & \answerE & \answerE & \answerE &  \\
	\textbf{O3} & \answerN & \answerN & \answerN & \answerN & \answerN & \answerN & \answerN & \answerN & \answerN & \answerN & \answerN & \answerN &       & \answerNA & \answerNA & \answerNA & \answerN & \answerN & \answerE & \answerE & \answerE & \answerE & \answerN &  \\
	\textbf{O4} & \answerE & \answerW & \answerN & \answerS & \answerE & \answerS & \answerE & \answerE & \answerN & \answerS & \answerE & \answerE &       & \answerNA & \answerNA & \answerNA & \answerE & \answerE & \answerE & \answerE & \answerE & \answerE & \answerE &  \\
	\textbf{O5} & \answerE & \answerN & \answerN & \answerS & \answerS & \answerS & \answerW & \answerE & \answerN & \answerS & \answerE & \answerS &       & \answerNA & \answerNA & \answerNA & \answerE & \answerE & \answerE & \answerE & \answerE & \answerE & \answerE &  \\
	\textbf{O6} & \answerN & \answerN & \answerN & \answerN & \answerN & \answerN & \answerN & \answerN & \answerN & \answerN & \answerE & \answerN &       & \answerNA & \answerNA & \answerNA & \answerN & \answerN & \answerE & \answerE & \answerE & \answerN & \answerN &  \\
	\textbf{P1} & \answerN & \answerN & \answerN & \answerN & \answerN & \answerN & \answerN & \answerN & \answerN & \answerN & \answerN & \answerN &       & \answerNA & \answerNA & \answerE & \answerNA & \answerNA & \answerE & \answerE & \answerN & \answerN & \answerN &  \\
	\textbf{P2} & \answerE & \answerW & \answerW & \answerE & \answerE & \answerE & \answerS & \answerW & \answerE & \answerS & \answerE & \answerE &       & \answerNA & \answerNA & \answerE & \answerNA & \answerNA & \answerE & \answerE & \answerE & \answerE & \answerE &  \\
	\textbf{P3} & \answerS & \answerE & \answerE & \answerS & \answerS & \answerE & \answerS & \answerW & \answerE & \answerS & \answerS & \answerS &       & \answerNA & \answerNA & \answerE & \answerNA & \answerNA & \answerE & \answerE & \answerE & \answerE & \answerE &  \\
	\textbf{P4} & \answerE & \answerW & \answerN & \answerW & \answerE & \answerE & \answerS & \answerW & \answerN & \answerS & \answerE & \answerE &       & \answerNA & \answerNA & \answerN & \answerNA & \answerNA & \answerE & \answerE & \answerE & \answerE & \answerE &  \\
	\textbf{P5} & \answerE & \answerW & \answerN & \answerE & \answerE & \answerE & \answerE & \answerW & \answerN & \answerS & \answerE & \answerE &       & \answerNA & \answerNA & \answerE & \answerNA & \answerNA & \answerE & \answerE & \answerE & \answerE & \answerE &  \\
	\textbf{P6} & \answerS & \answerE & \answerW & \answerS & \answerE & \answerE & \answerS & \answerW & \answerN & \answerS & \answerS & \answerW &       & \answerNA & \answerNA & \answerE & \answerNA & \answerNA & \answerE & \answerE & \answerE & \answerE & \answerE &  \\
	\textbf{P7} & \answerN & \answerE & \answerN & \answerE & \answerW & \answerE & \answerN & \answerN & \answerN & \answerE & \answerE & \answerE &       & \answerNA & \answerNA & \answerE & \answerNA & \answerNA & \answerE & \answerE & \answerE & \answerE & \answerN &  \\
	\textbf{P8} & \answerE & \answerS & \answerS & \answerS & \answerS & \answerS & \answerS & \answerE & \answerW & \answerS & \answerS & \answerS &       & \answerNA & \answerNA & \answerE & \answerNA & \answerNA & \answerE & \answerE & \answerE & \answerE & \answerE &  \\
	\bottomrule
\end{tabular}

\end{table*}
\par
By observing the left-hand part of the table, where the opinions of the experts are reported, it is easy to note that the evaluations of the test scenarios by the experts are often very different from each other. In fact, some of the experts tend to classify tether configurations as entangled more often than others, which indicates that their own definition of entanglement is more conservative than that of others. On the other hand, some experts only indicate a few scenarios as entangled (e.g., experts 2 and 4).
The right-hand side of the table shows the results of the application of the non-entanglement definitions to the test scenarios. 
Here it is possible to observe how the definitions differ in the evaluation of the scenarios, with some definitions being more conservative than others.
By comparing the two parts of the table, one can observe which definitions result to be closer to the opinions of the experts. Definitions \ref{def:tether_contact_with_obstacle}, \ref{def:risk_of_entanglement}, and \ref{def:closed_tether_homotopy_to_constant_map}, for instance, coincide almost always with the average opinion from the experts. However, these definitions need specific conditions to be satisfied, and can only be applied to a limited number of the test scenarios.
On the contrary, the new definitions proposed in this work (Definitions \ref{def:obstacle_free_conv_hull}--\ref{def:local_visibility_homotopy} and \ref{def:path_class_relaxed_local_visibility_homotopy}) can be applied to all the test scenarios.
It can also be noted, as already observed in Section \ref{sec:properties}, that Definitions \ref{def:obstacle_free_conv_hull} and \ref{def:obstacle_free_linear_homotopy} are quite conservative in evaluating entanglement, but also that they provide a good definition of a safe set for Definition \ref{def:homotopy_to_safe_set}, which results instead to be closer to the opinion of the experts. 
Definitions \ref{def:local_visibility_homotopy} and \ref{def:path_class_relaxed_local_visibility_homotopy} often coincide with the average opinion of the experts, with Definition \ref{def:path_class_relaxed_local_visibility_homotopy} being the closest one.

\section{Conclusions}
\label{sec:conclusions}
We have considered the problem of defining tether entanglement for tethered robots, in order to determine if a tether configuration is entangled or not. 
We have reviewed the entanglement definitions available in the literature and proposed several new entanglement definitions. 
All these definitions can be used to evaluate the entanglement state of a tether configuration. 
We have discussed the properties of the different definitions, highlighting their individual strengths and weaknesses, and analyzed the relationships between them.
In particular, the comparison of the definitions shows how some of the newly proposed definitions generalize many of the definitions existing in the literature, resulting in more comprehensive definitions of entanglement. 
\par
The main direction for future work regards the integration of the proposed definitions in motion planning algorithms for tethered robots, with the goal of obtaining safer and more robust trajectories. This entails the development of general motion planning algorithms for tethered robots that can make use of different entanglement definitions, that are robust to uncertainties in the localization of the tether and the obstacles, and that are able to find disentangling paths in case a robots has an entangled tether.
\par
A second important open research direction is the development of \emph{continuous} entanglement definitions based on the measure of a level of entanglement. By using this type of entanglement definitions, a set of tether configurations can be ordered relatively to each other depending on their level of entanglement. 
This type of definition can find application in entanglement-aware motion planning algorithms for tethered robots that focus on keeping the tether at a minimum level of entanglement, that is, that focus on optimizing the risk/safety level of the tethered robot.
For example, in the motion planning problem depicted in Figure \ref{fig:example_motion_planning}, a continuous entanglement definition can provide a way to rank the possible motion paths depending on the level of entanglement of the tether configurations resulting from the motion of the robot along those paths.
\par
Other open issues include the integration of self-knotting in the entanglement definitions, which is an unwanted condition that none of the entanglement models available in the literature captures effectively. This condition occurs in 3D when a tether passes through a loop or `eyelet' created by itself, and can lead to critical entanglement scenarios.

\appendices
\section{Proofs of Sections IV and V}
\label{appendix:proofs_definitions}
In this appendix we provide some technical results that have been used in Sections \ref{sec:preliminaries} and \ref{sec:definitions}.

\begin{lemma}[\hspace{1sp}\cite{hatcher2005algebraic}, p.25]
	\label{lemma:path_homotopy_in_convex_set}
	Given a convex subset of $\mathcal{Y}\subseteq\mathbb{R}^n$, all paths in $\mathcal{Y}$ with given endpoints $x_1$ and $x_2$ are path-homotopic to each other.
\end{lemma}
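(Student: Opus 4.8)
The plan is to exhibit an explicit path homotopy via the classical straight-line (linear) interpolation, which is the same device used in the Obstacle-free Linear Homotopy of Definition \ref{def:obstacle_free_linear_homotopy}, except that here the two endpoints of the deformation are two paths rather than a path and a point. Let $\gamma_1, \gamma_2 : [0,1] \to \mathcal{Y}$ be two paths sharing the endpoints, i.e. $\gamma_1(0) = \gamma_2(0) = x_1$ and $\gamma_1(1) = \gamma_2(1) = x_2$. I would define the candidate homotopy $H : [0,1] \times [0,1] \to \mathbb{R}^n$ by
\[
	H(s,t) = (1-t)\,\gamma_1(s) + t\,\gamma_2(s),
\]
which for each fixed $s$ linearly interpolates between $\gamma_1(s)$ and $\gamma_2(s)$.

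First I would check that $H$ is a well-defined path homotopy landing inside $\mathcal{Y}$. Continuity of $H$ follows immediately from the continuity of $\gamma_1$ and $\gamma_2$ together with the continuity of the affine combination. The key step --- and essentially the only place the hypothesis enters --- is verifying that $H$ takes values in $\mathcal{Y}$: for every $(s,t)$, the point $H(s,t)$ is a convex combination of $\gamma_1(s)\in\mathcal{Y}$ and $\gamma_2(s)\in\mathcal{Y}$, so convexity of $\mathcal{Y}$ guarantees $H(s,t)\in\mathcal{Y}$. This is exactly why convexity, rather than mere connectedness, is required.

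Finally I would verify the boundary conditions that promote $H$ to a \emph{path} homotopy. Setting $t=0$ and $t=1$ yields $H(s,0)=\gamma_1(s)$ and $H(s,1)=\gamma_2(s)$, so $H$ interpolates between the two paths; evaluating at $s=0$ and $s=1$ gives $H(0,t)=(1-t)x_1+t\,x_1=x_1$ and $H(1,t)=(1-t)x_2+t\,x_2=x_2$ for all $t$, so the endpoints remain fixed throughout the deformation. This establishes $\gamma_1\sim\gamma_2$. I anticipate no genuine obstacle in this argument: every step is elementary once convexity is invoked to keep $H$ inside $\mathcal{Y}$, which is precisely the content of the cited reference.
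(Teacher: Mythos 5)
Your proof is correct and is exactly the standard straight-line homotopy argument that the paper's cited source (Hatcher, p.~25) uses for this result; the paper itself gives no independent proof, deferring entirely to that reference. The linear interpolation $H(s,t)=(1-t)\,\gamma_1(s)+t\,\gamma_2(s)$, with convexity keeping the image in $\mathcal{Y}$ and the endpoint checks you performed, is precisely the canonical argument.
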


\begin{lemma}[Path homotopy in loops]
	\label{lemma:null_homotopic_loops}
	Let $\mathcal{Y}$ be a path-connected space, and $\gamma:[0,1]\rightarrow\mathcal{Y}$ be a loop, i.e., a path such that $\gamma(0) = \gamma(1) = x_0$, that is path-homotopic to its base point $x_0$. 
	Then for any two paths $\gamma_1$, $\gamma_2$ such that $\gamma_1\diamond\gamma_2^\text{reverse} = \gamma$, it holds that $\gamma_1 \sim \gamma_2$.
\end{lemma}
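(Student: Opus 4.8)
The plan is to reduce the statement to the standard algebraic identities governing concatenation of paths up to path homotopy — associativity, the identity law for constant paths, and the inverse law $\delta^\text{reverse} \diamond \delta \sim \delta(1)$ (the constant loop at $\delta(1)$) — all of which are classical and available in \cite{hatcher2005algebraic}. Before invoking them, I would first pin down the endpoints that the hypotheses impose. Since $\gamma = \gamma_1 \diamond \gamma_2^\text{reverse}$ is a loop based at $x_0$, reading off $\gamma(0)$ and $\gamma(1)$ gives $\gamma_1(0) = x_0$ and $\gamma_2(0) = x_0$, while the mere fact that the concatenation $\gamma_1 \diamond \gamma_2^\text{reverse}$ is defined forces $\gamma_1(1) = \gamma_2(1) =: x_1$. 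Hence $\gamma_1, \gamma_2 \in \Gamma_{x_0, x_1}$, and the claim $\gamma_1 \sim \gamma_2$ is a statement about path homotopy with the common endpoints $x_0, x_1$ held fixed, as required for $\sim$ to be the relevant equivalence relation.

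The heart of the argument is then a chain of path homotopies built by right-concatenating the hypothesis $\gamma \sim x_0$ with $\gamma_2$. Explicitly, I would justify
\begin{align*}
	\gamma_1 &\sim \gamma_1 \diamond x_1 \sim \gamma_1 \diamond (\gamma_2^\text{reverse} \diamond \gamma_2) \\
	&\sim (\gamma_1 \diamond \gamma_2^\text{reverse}) \diamond \gamma_2 \sim x_0 \diamond \gamma_2 \sim \gamma_2,
\end{align*}
reading the five equivalences in turn: the first is the right-identity law (the constant path at $x_1 = \gamma_1(1)$); the second replaces $x_1$ by the null-homotopic loop $\gamma_2^\text{reverse} \diamond \gamma_2$, which is based at $x_1 = \gamma_2(1)$, using that concatenation respects $\sim$; the third is associativity up to path homotopy; the fourth substitutes the hypothesis $\gamma_1 \diamond \gamma_2^\text{reverse} = \gamma \sim x_0$, again using that concatenation respects $\sim$ (here $\gamma(1) = x_0 = \gamma_2(0)$, so the concatenation with $\gamma_2$ is defined); and the fifth is the left-identity law (the constant path at $x_0 = \gamma_2(0)$). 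Because $\sim$ is transitive on $\Gamma_{x_0, x_1}$, chaining these yields $\gamma_1 \sim \gamma_2$, which is the assertion.

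I expect no real obstacle here: the result is a textbook consequence of the fundamental-groupoid laws, and the proof is essentially endpoint bookkeeping. The only points that deserve explicit care are (i) checking at each step that the concatenation in question is actually defined, i.e., that the terminal point of the left factor equals the initial point of the right factor — which is exactly what the endpoint identities from the first paragraph guarantee — and (ii) using the well-definedness of $\diamond$ on path-homotopy classes (if $\alpha \sim \alpha'$ and $\beta \sim \beta'$ with matching endpoints, then $\alpha \diamond \beta \sim \alpha' \diamond \beta'$), which I again take from \cite{hatcher2005algebraic}. I would note in passing that the argument manipulates only $\gamma_1$, $\gamma_2$ and their reverses and products, so it never leaves the images of $\gamma_1$ and $\gamma_2$; path-connectedness of $\mathcal{Y}$ is therefore not actually used in this argument and is only inherited harmlessly from the ambient setting.
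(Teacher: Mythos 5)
Your proof is correct and is essentially the paper's own proof: the identical five-step chain of homotopies (right identity, the inverse law $\gamma_2^\text{reverse}\diamond\gamma_2 \sim$ constant, associativity, substitution of the hypothesis $\gamma \sim x_0$, left identity), justified by the same well-definedness of $\diamond$ on path-homotopy classes and the same endpoint bookkeeping. Your closing observation is also apt: the paper additionally invokes path-connectedness of $\mathcal{Y}$ to assert that $\gamma$ is path-homotopic to \emph{any} of its points, but that assertion is never used in the chain (and a fixed-endpoint homotopy from $\gamma$ to a constant map at a point other than $x_0$ would not even have matching endpoints), so your version, which drops it, is if anything cleaner.
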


\begin{proof}
	Let $x_1 = \gamma_1(0) = \gamma_2(0)$ and $x_2 = \gamma_1(1) = \gamma_2(1)$. 
	Since $\mathcal{Y}$ is path-connected, $\gamma$ is path-homotopic to any of its points, which implies that $\gamma\sim x_1$ and $\gamma\sim x_2$.
	By using elementary properties of how path homotopy is preserved under path concatenation \cite[Theorem~7.11]{lee2010introduction} we have $\gamma_1 \sim \gamma_1 \diamond x_2 \sim \gamma_1 \diamond (\gamma_2^\text{reverse} \diamond \gamma_2) \sim (\gamma_1 \diamond \gamma_2^\text{reverse}) \diamond \gamma_2 \sim x_1 \diamond \gamma_2 \sim \gamma_2$, i.e., $\gamma_1 \sim \gamma_2$.
\end{proof}

\begin{lemma}[Existence of shortest path]
	\label{lemma:existence_shortest_path}
	Given any two points $x_1, x_2 \in \mathcal{X}_\text{free}$ there exists a shortest admissible path $\gamma^*$ between those two points, where $\gamma^* \in \arg\min_{\gamma\in\Gamma_{x_1, x_2}}[\len(\gamma)]$.
\end{lemma}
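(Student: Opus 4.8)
The plan is to use the direct method of the calculus of variations: take a length-minimizing sequence of admissible paths, extract a uniformly convergent subsequence via the Arzel\`a--Ascoli theorem, and verify that the limit is again an admissible path whose length attains the infimum. First I would set $L := \inf_{\gamma\in\Gamma_{x_1,x_2}}\len(\gamma)$ and argue that $L<\infty$. Since $\mathcal{X}_\text{free}$ is assumed to be a single path-connected component, and since it is the closure of an open convex set with finitely many disjoint, non-degenerate, simply-connected-interior obstacles removed, one can join $x_1$ and $x_2$ by at least one rectifiable (for instance piecewise-linear) path. Hence $\Gamma_{x_1,x_2}$ contains a finite-length path and $L$ is finite.

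Next I would pick a minimizing sequence $(\gamma_k)$ with $\len(\gamma_k)\to L$ and reparametrize each $\gamma_k$ by constant speed on $[0,1]$, which is legitimate because a finite-length path is rectifiable and therefore admits a (normalized) arc-length parametrization; this operation changes neither the image, the endpoints, nor the length. After reparametrization each $\gamma_k$ is $\len(\gamma_k)$-Lipschitz, so for $k$ large the family is uniformly Lipschitz with constant at most $L+1$ and thus equicontinuous. Moreover every such path remains within distance $L+1$ of $x_1$, so all $\gamma_k$ lie in $K := \closure(B(x_1,L+1))\cap\mathcal{X}_\text{free}$, which is closed and bounded, hence compact.

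The Arzel\`a--Ascoli theorem then yields a subsequence $\gamma_{k_j}$ converging uniformly to a continuous limit $\gamma^*$. Because the endpoints are fixed along the sequence, $\gamma^*(0)=x_1$ and $\gamma^*(1)=x_2$; and because $\mathcal{X}_\text{free}$ is closed in $\mathbb{R}^n$ by construction, the uniform limit $\gamma^*$ stays inside $\mathcal{X}_\text{free}$, so $\gamma^*\in\Gamma_{x_1,x_2}$. Finally, realizing $\len(\gamma)$ as the supremum, over finite partitions of $[0,1]$, of sums of segment lengths $\sum_i\norm{\gamma(t_{i+1})-\gamma(t_i)}$ exhibits the length functional as a supremum of maps that are continuous under pointwise convergence, so $\len$ is lower semicontinuous; hence $\len(\gamma^*)\le\liminf_j\len(\gamma_{k_j})=L$. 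Since $\gamma^*\in\Gamma_{x_1,x_2}$ forces $\len(\gamma^*)\ge L$, we conclude $\len(\gamma^*)=L$, i.e. $\gamma^*\in\argmin_{\gamma\in\Gamma_{x_1,x_2}}\len(\gamma)$.

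I expect the main obstacle to be the two structural points that the compactness argument silently relies on: establishing finiteness of $L$ by producing an explicit rectifiable path (where the regularity assumptions on the obstacle boundaries, in particular the absence of a degenerate boundary, are what prevent pathological behaviour near $\partial\mathcal{O}$), and ensuring that the uniform limit remains admissible. The latter is exactly why $\mathcal{X}_\text{free}$ is defined with the closure operator, as noted in the footnote accompanying its definition: without closedness the limiting path could escape the free workspace and the argument would break down.
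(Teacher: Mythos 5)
Your proof is correct and takes essentially the same route as the paper: the paper's proof simply observes that $\mathcal{X}_\text{free}$, being a closed subset of $\mathbb{R}^n$, is boundedly compact and then cites the shortest-path existence result of Burago et al.\ (Corollary~2.5.20), whose textbook proof is exactly your direct-method argument (constant-speed reparametrization of a minimizing sequence, Arzel\`a--Ascoli on a closed bounded hence compact subset, lower semicontinuity of the length functional, closedness of $\mathcal{X}_\text{free}$ to keep the limit admissible). Your write-up is thus a self-contained unpacking of the paper's citation, and it even makes explicit the finiteness of the infimum (existence of at least one rectifiable connecting path), a point the paper passes over silently.
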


\begin{proof}
	The free space $\mathcal{X}_\text{free}$ is a \emph{boundedly compact metric space}, i.e., all closed bounded sets in it are compact \cite[Definition~1.6.7]{burago2001course}, since it is a closed subset of $\mathbb{R}^n$. Therefore, by \cite[Corollary~2.5.20]{burago2001course} there exists a shortest path between any two path-connected points in $\mathcal{X}_\text{free}$.
\end{proof}

Next, we formalize the \emph{homotopy signature}, which is a topological invariant that uniquely identifies the homotopy equivalence class of any path $\gamma\in\Gamma_{x_\text{a}, x_\text{r}}$.
Given a path $\gamma$, its signature is indicated as $\mathbbm{h}(\gamma)$. 
A signature is a \emph{word} generated as the free product of a finite set of letters \cite{bhattacharya2018path}.
All paths belonging to the same homotopy class have the same signature. Several approaches are available for the identification of homotopy classes through the use of signatures, both in two and three dimensions \cite{bhattacharya2010search, bhattacharya2015persistent, tovar2010sensor}. 
We largely base our definition of homotopy signature on \cite{bhattacharya2018path}.
In the following we refer to a continuous mapping $\beta:I\rightarrow\mathcal{X}_\text{free}$, where $I\subset\mathbb{R}$ is an interval, as a \emph{continuous curve}.

\begin{definition*}[Transversality]
	In $\mathbb{R}^2$, a path $\gamma$ and a curve $\zeta$ are said to be transversal if at every point of intersection between them they have distinct tangents \cite{guillemin1974differential}.
\end{definition*}

\begin{definition*}[Complete invariant \cite{faticoni2016modules}]
	\label{def:complete_invariant}
	A function $\mathbbm{h}$ from $\Gamma_{x_1, x_2}$ to the set $\Gamma_{x_1, x_2}/\sim$ is called a complete homotopy invariant if
	\begin{equation*}
		\mathbbm{h}(\gamma_1) = \mathbbm{h}(\gamma_2) \iff \gamma_1 \sim \gamma_2.
	\end{equation*}
\end{definition*}

\begin{construction}[Signature of a path]
	\label{construction:signature}
	Given a 2-dimensional manifold $\mathcal{X}_\text{free}$, let $\zeta_1, \zeta_2,\ldots, \zeta_p$ be continuous curves, called \emph{representative curves}, such that $\partial \zeta_i \subseteq \partial \mathcal{X}_\text{free}$.\footnote{$\partial$ indicates the \emph{boundary of a manifold}, which in case of a curve corresponds to its endpoints. For example, in case of a line segment $\zeta$ the boundary $\partial \zeta$ corresponds to the two endpoints of $\zeta$, while if $\zeta$ is a ray the boundary corresponds to the initial point of $\zeta$.}
	Then, for any two fixed points $x_\text{a}$, $x_\text{r}$, given a path $\gamma$ connecting $x_\text{a}$ and $x_\text{r}$ that is in general position (transverse) with respect to the $\zeta_i$, and that crosses the $\zeta_i$ a finite number of times, it is possible to construct a word by following the path from the start to the end and inserting in the word the letter $z_i$ or $z^{-1}_i$ whenever the path intersects the curve $\zeta_i$ with positive or negative orientation respectively. 
	By deleting any string of the type $z_i z^{-1}_i$ and $z^{-1}_i z_i$ we obtain a \emph{reduced word}.
\end{construction}

\begin{proposition}[Signature is a complete invariant]
	\label{prop:signature_is_invariant}
	Reduced words constructed as described in Construction \ref{construction:signature} are complete homotopy invariants for paths in $\mathcal{X}_\text{free}$ joining two given points $x_\text{a}$ and $x_\text{r}$ if the following conditions hold:
	\begin{enumerate}[label=\roman*)]
		\item $\zeta_i \cap \zeta_j = \emptyset, \forall i \neq j$;
		\item $\mathcal{X}_\text{free} \setminus \bigcup\limits_{i=1}^p \zeta_i$ is path-connected and simply connected;
		\item $\pi_1(\mathcal{X}_\text{free} \setminus \bigcup\limits_{i=1, i \neq j}^p \zeta_i) \cong \mathbb{Z}, \forall j \in \{1, \ldots, p\}$,
	\end{enumerate}
	where $\cong$ indicates a group isomorphism.
\end{proposition}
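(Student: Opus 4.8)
The plan is to prove the biconditional $\mathbbm{h}(\gamma_1)=\mathbbm{h}(\gamma_2)\iff\gamma_1\sim\gamma_2$ as two separate implications, namely that the reduced word is a homotopy \emph{invariant} (forward) and is \emph{complete} (reverse). The unifying idea is to identify the signature with the canonical isomorphism $\pi_1(\mathcal{X}_\text{free},x_\text{a})\cong F_p$, where $F_p$ denotes the free group on the $p$ letters $z_1,\dots,z_p$. First I would reduce from fixed-endpoint paths to based loops: fixing a reference path $\gamma_\text{ref}\in\Gamma_{x_\text{a},x_\text{r}}$, the map $\gamma\mapsto[\gamma\diamond\gamma_\text{ref}^\text{reverse}]$ is a bijection between path-homotopy classes in $\Gamma_{x_\text{a},x_\text{r}}$ and $\pi_1(\mathcal{X}_\text{free},x_\text{a})$ by the concatenation properties of path homotopy already used in Lemma~\ref{lemma:null_homotopic_loops}; since the word of $\gamma\diamond\gamma_\text{ref}^\text{reverse}$ is the reduction of $\mathbbm{h}(\gamma)\,\mathbbm{h}(\gamma_\text{ref})^{-1}$, it suffices to prove the claim for loops based at $x_\text{a}$.

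For invariance, let $H$ be a path homotopy from $\gamma_1$ to $\gamma_2$. By a standard transversality argument I would perturb $H$, rel the fixed endpoints, to be transverse to each $\zeta_i$, where condition (i) keeps the crossing data for distinct curves disjoint. Then each preimage $H^{-1}(\zeta_i)\subseteq[0,1]^2$ is a compact $1$-manifold with boundary on $\partial([0,1]^2)$. Because $H$ fixes $x_\text{a},x_\text{r}$, which we may assume lie off every $\zeta_i$, the two vertical sides contribute no boundary points, so each arc of $H^{-1}(\zeta_i)$ is either a closed loop, irrelevant to the word, or joins two points on the horizontal sides $\{t=0\}\cup\{t=1\}$. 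An arc from $\{t=0\}$ to $\{t=1\}$ matches a crossing of $\gamma_1$ with a crossing of $\gamma_2$ of the same letter and sign, whereas an arc with both endpoints on one horizontal side joins two opposite-sign crossings of that path, i.e. a factor $z_iz_i^{-1}$ or $z_i^{-1}z_i$ that vanishes under reduction. Bookkeeping the signs along the arcs shows the two reduced words coincide, giving $\mathbbm{h}(\gamma_1)=\mathbbm{h}(\gamma_2)$.

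For completeness I would first use conditions (i)--(iii) to show $\pi_1(\mathcal{X}_\text{free},x_\text{a})\cong F_p$ with the signature realizing the isomorphism. Condition (ii) makes $U_0:=\mathcal{X}_\text{free}\setminus\bigcup_i\zeta_i$ simply connected, so between consecutive crossings a loop lies in a simply connected region and its class is pinned down by its crossing sequence; this yields that the classes $g_j$, each a small loop crossing only $\zeta_j$ once and positively, generate $\pi_1$, and that the signature homomorphism $\Phi:\pi_1\to F_p$ sending $g_j\mapsto z_j$ is well defined by invariance and surjective. Injectivity is where condition (iii) enters, supplying for each $U_j=\mathcal{X}_\text{free}\setminus\bigcup_{i\neq j}\zeta_i$ a $\mathbb{Z}$-valued crossing count that forces each $g_j$ to have infinite order and prevents distinct cuts from interacting. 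Concretely, a loop with empty reduced word must contain two consecutive opposite-sign crossings of some $\zeta_j$ joined by a sub-arc in $U_0$, which by simple connectivity of $U_0$ bounds a finger that can be pushed back across $\zeta_j$ to delete the pair; iterating leaves a crossing-free loop confined to $U_0$, hence null-homotopic. Thus $\Phi$ is an isomorphism, equal reduced words force equal classes in $\pi_1$, and together with the loop reduction above this gives $\gamma_1\sim\gamma_2$.

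The main obstacle is this injectivity step: showing that conditions (ii) and (iii) force $\pi_1$ to be \emph{free} on the $g_j$ with \emph{no relations}, rather than merely generated by them. Condition (iii) is exactly what pins each generator to infinite order and decouples the cuts, and the delicate part is making the innermost-bigon cancellation rigorous, including verifying that each word-reduction move is realized by a genuine homotopy of the loop. The transversality bookkeeping in the invariance step is routine by comparison.
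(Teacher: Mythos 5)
First, a point of reference: the paper itself does not prove Proposition \ref{prop:signature_is_invariant}; its ``proof'' is a one-line pointer to page 143 of \cite{bhattacharya2018path}. So your self-contained argument cannot be matched against an in-paper derivation and must be judged on its own merits. Your overall architecture --- reduce fixed-endpoint classes to $\pi_1(\mathcal{X}_\text{free},x_\text{a})$ by concatenating with a reference path, prove invariance by making the homotopy transverse to the $\zeta_i$ and reading off the arcs of the preimage, and prove completeness by showing the word map is an isomorphism onto the free group on $z_1,\dots,z_p$ --- is sound and is essentially the standard route (and close in spirit to the cited source). The invariance half is fine modulo routine care: an arc of $H^{-1}(\zeta_i)$ with both endpoints on the same horizontal side joins two opposite-sign crossings that need \emph{not} be adjacent in the word (e.g.\ the word $z_1z_2z_1^{-1}$), so ``vanishes under reduction'' requires an innermost-arc induction; disjointness of the arcs, guaranteed by condition (i), supplies an innermost arc and also forces the bottom-to-top arcs to preserve order.

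The genuine gap is in the injectivity step, exactly where you flagged it. As written, the cancellation move is justified by ``simple connectivity of $U_0$,'' but the finger you want to push across $\zeta_j$ is \emph{not} contained in $U_0$: the sub-arc of the loop has its endpoints on $\zeta_j$ itself, and the bigon it bounds contains a piece of $\zeta_j$ and points on both sides of it. Worse, the assertion that this bigon can be pushed across is precisely an instance of the statement being proved --- that a loop whose reduced word is empty (here $z_jz_j^{-1}$) is null-homotopic --- so the argument is circular as it stands. Two repairs are available. (a) A two-stage push: trim the sub-arc so it lies in $U_0$, use simple connectivity of $U_0$ to homotope it, rel endpoints, onto a path running parallel to $\zeta_j$ inside a thin neighborhood of the portion of $\zeta_j$ between the two crossing points, and then push across inside that neighborhood, which is a disk; this needs regularity hypotheses (an embedded arc of $\zeta_j$ whose neighborhood avoids the obstacles and, by condition (i), the other curves). (b) Deploy condition (iii), which your write-up invokes only decoratively: the signed crossing number with $\zeta_j$ defines a homomorphism from $\pi_1(U_j)$ to $\mathbb{Z}$, where $U_j=\mathcal{X}_\text{free}\setminus\bigcup_{i\neq j}\zeta_i$ (this is your invariance argument applied in $U_j$ with the single curve $\zeta_j$); it is surjective because path-connectedness of $U_0$ supplies a loop crossing $\zeta_j$ exactly once; since $\pi_1(U_j)\cong\mathbb{Z}$ and every surjective endomorphism of $\mathbb{Z}$ is an isomorphism, the crossing count is an isomorphism, so the bigon loop, having crossing number zero, is null-homotopic in $U_j\subseteq\mathcal{X}_\text{free}$, which licenses the push. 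Route (b) is where condition (iii) genuinely enters the proof; without one of these repairs the completeness direction is not established, and the phrases ``infinite order'' and ``prevents distinct cuts from interacting'' do no mathematical work.
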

\begin{proof}
	The proof is provided on page 143 of \cite{bhattacharya2018path}.
\end{proof}

We call the homotopy invariant obtained from Construction~\ref{construction:signature} and Proposition \ref{prop:signature_is_invariant} \emph{homotopy signature}, and we indicate it with $\mathbbm{h}(\cdot)$.

\section{Proofs of Section VI}
\label{appendix:proofs_properties}

\begin{proof}[Proof of Proposition \ref{prop:reachable_set_straight_tether}]
	The proof of this result is trivial. In fact, Definitions \ref{def:tether_contact_with_obstacle} and \ref{def:taut_tether_contact_with_other_tether} are only applicable to taut tether configurations, and require the tether to be a straight line segment for it to not be considered entangled.
\end{proof}

\begin{proof}[Proof of Proposition \ref{prop:reachable_set_non_looping_tether}]
	In Lemma \ref{lemma:existence_shortest_path} we have established that, given the topological properties of $\mathcal{X}_\text{free}$, there exists a shortest path between any given pair of points $x_1$, $x_2$. Since a shortest path never contains a loop, that means that for any point $x\in\mathcal{X}_\text{free}$ there exists a shortest path $\gamma^*_{x_\text{a}, x}$ that does not contain a loop, and that therefore is not entangled according to Definition \ref{def:local_looping}. Thus, $\mathcal{N}_{\x{a}, \ref{def:local_looping}} = \mathcal{X}_\text{free}$.
\end{proof}

\begin{proof}[Proof of Proposition \ref{prop:reachable_set_conv_hull_linear_homotopy}]
	Proposition \ref{prop:reachable_set_conv_hull_linear_homotopy} states that $\mathcal{N}_{\x{a}, \ref{def:obstacle_free_conv_hull}} = \mathcal{N}_{\x{a}, \ref{def:obstacle_free_linear_homotopy}} = \{ x: l_{x, \x{a}} \cap\interior\mathcal{O}=\emptyset\}$.
	Given a point $x\in\mathcal{X}_\text{free}$ such that $l_{\x{a}, x} \cap\interior\mathcal{O}=\emptyset$, the path $\gamma=l_{\x{a}, x}$ from $\x{a}$ to $x$ is a non-entangled tether configuration with respect to both Definition \ref{def:obstacle_free_conv_hull} and Definition \ref{def:obstacle_free_linear_homotopy}, and therefore $x\in\mathcal{N}_{\x{a}, \ref{def:obstacle_free_conv_hull}}$ and $x\in\mathcal{N}_{\x{a}, \ref{def:obstacle_free_linear_homotopy}}$.
	Conversely, if there exists a tether configuration from $x_\text{a}$ to $x$ with $x\in\mathcal{N}_{\x{a}, \ref{def:obstacle_free_conv_hull}}$ or $x\in\mathcal{N}_{\x{a}, \ref{def:obstacle_free_linear_homotopy}}$, then $l_{\x{a}, x} \cap\interior\mathcal{O}=\emptyset$, since $l_{\x{a}, x}$ is part of $\conv(\gamma)$ in the former case, and of the linear path homotopy defined in \eqref{eq:linear_homotopy} in the latter case.
	It follows that $\mathcal{N}_{\x{a}, \ref{def:obstacle_free_conv_hull}} = \mathcal{N}_{\x{a}, \ref{def:obstacle_free_linear_homotopy}} = \{x \in \mathcal{X}_\text{free}: l_{\x{a}, x}\cap\interior\mathcal{O}=\emptyset\}$.
\end{proof}

\begin{lemma}
	\label{lemma:local_homotopy}
	Given a pair of fixed points $x_1, x_2$, the shortest tether configuration $\gamma^*_{x_1, x_2}$ always satisfies the Local Visibility Homotopy definition (Definition \ref{def:local_visibility_homotopy}).
\end{lemma}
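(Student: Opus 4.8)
The plan is to fix an arbitrary pair $s_1 \le s_2$ such that $y_1 \coloneqq \gamma^*_{x_1,x_2}(s_1)$ and $y_2 \coloneqq \gamma^*_{x_1,x_2}(s_2)$ satisfy the visibility condition $l_{y_1,y_2}\cap\interior\mathcal{O}=\emptyset$, and to show that the restricted path $\gamma^*_{[s_1,s_2]}$ is path-homotopic to $l_{y_1,y_2}$; since $(s_1,s_2)$ is arbitrary, this is exactly the requirement of Definition \ref{def:local_visibility_homotopy}. The first step is to observe that any sub-path of a globally shortest path is itself globally shortest between its own endpoints. Indeed, if some admissible path $\delta$ from $y_1$ to $y_2$ were strictly shorter than $\gamma^*_{[s_1,s_2]}$, then replacing the middle portion of $\gamma^*_{x_1,x_2}$ by $\delta$ (that is, following $\gamma^*_{x_1,x_2}$ on $[0,s_1]$, then $\delta$, then $\gamma^*_{x_1,x_2}$ on $[s_2,1]$) would yield an admissible path from $x_1$ to $x_2$ of strictly smaller length, by additivity of length under concatenation. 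This contradicts the global minimality of $\gamma^*_{x_1,x_2}$, so $\gamma^*_{[s_1,s_2]}$ is a shortest admissible path from $y_1$ to $y_2$.

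Next I would pin down the length of this sub-path. Because $l_{y_1,y_2}$ is obstacle-free by assumption, it is an admissible path from $y_1$ to $y_2$ of length $\norm{y_2-y_1}$, which is the Euclidean distance and hence a lower bound for the length of every path joining $y_1$ and $y_2$. Combining this lower bound with the minimality established above forces $\len(\gamma^*_{[s_1,s_2]}) = \norm{y_2 - y_1}$. I would then invoke uniqueness of Euclidean geodesics: since the Euclidean norm is strictly convex, the only rectifiable curve from $y_1$ to $y_2$ attaining the straight-line distance is the segment itself, because equality in the triangle inequality forces the curve to stay on, and advance monotonically along, $l_{y_1,y_2}$. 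Consequently the image of $\gamma^*_{[s_1,s_2]}$ coincides with the segment $l_{y_1,y_2}$, so $\gamma^*_{[s_1,s_2]}$ is merely a reparametrization of $l_{y_1,y_2}$.

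Finally, both $\gamma^*_{[s_1,s_2]}$ and $l_{y_1,y_2}$ are paths from $y_1$ to $y_2$ whose images lie in the set $l_{y_1,y_2}$, which is convex and, being obstacle-free, contained in $\mathcal{X}_\text{free}$. Applying Lemma \ref{lemma:path_homotopy_in_convex_set} to this convex set yields $\gamma^*_{[s_1,s_2]} \sim l_{y_1,y_2}$, which completes the argument. The main obstacle is the geodesic-uniqueness step: one must argue carefully that equality of length with the straight-line distance rules out any excursion off the segment (including backtracking), which is precisely where strict convexity of the Euclidean norm is essential. The existence of the shortest paths invoked throughout is guaranteed by Lemma \ref{lemma:existence_shortest_path}.
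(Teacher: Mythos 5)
Your proof is correct, and it rests on the same two pillars as the paper's own proof: the exchange (splicing) argument that exploits global minimality of $\gamma^*_{x_1,x_2}$, and the optimality of the straight segment among Euclidean paths. The difference is the direction in which the argument is run. The paper argues by contradiction: if some visible pair gave $\gamma^*_{[s_1,s_2]} \not\sim l_{x_1,x_2}$, the two paths cannot coincide, hence (the segment being the shortest connection) $\len(l_{x_1,x_2}) < \len(\gamma^*_{[s_1,s_2]})$, and splicing the segment into $\gamma^*$ contradicts its minimality. You instead run the argument forward: the sub-path is itself globally shortest, its length therefore equals $\norm{y_2-y_1}$, geodesic uniqueness forces it to be a reparametrization of the segment, and Lemma \ref{lemma:path_homotopy_in_convex_set} then gives the homotopy. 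Your route buys rigor exactly where the paper is terse: the paper's inference ``not homotopic, hence not coincident, hence strictly longer than the segment'' silently invokes uniqueness of Euclidean length-minimizers, which you actually prove via strict convexity of the norm; you also obtain the stronger conclusion that every visible sub-path of a globally shortest path is literally straight, not merely homotopic to a straight segment. The paper's route buys brevity: since the homotopy conclusion is the negation of its standing assumption, no convexity lemma is needed at the end. A minor remark: once you know $\gamma^*_{[s_1,s_2]}$ is a reparametrization of $l_{y_1,y_2}$, you could conclude $\gamma^*_{[s_1,s_2]} \sim l_{y_1,y_2}$ directly from invariance of the path-homotopy class under reparametrization; applying Lemma \ref{lemma:path_homotopy_in_convex_set} to the convex set $l_{y_1,y_2} \subseteq \mathcal{X}_\text{free}$ works equally well, as you do.
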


\begin{proof}[Proof (by contradiction)]
	Let $\gamma^*$ be the shortest path between two points in $\mathcal{X}_\text{free}$ and suppose that $\gamma^*$ is entangled with respect to the Local Visibility Homotopy definition (Definition \ref{def:local_visibility_homotopy}). 
	This means that there exist two points $x_1 = \gamma^*(s_1), x_2 = \gamma^*(s_2)$, such that $l_{x_1, x_2} \not\sim \gamma^*_{[s_1, s_2]}$. Since $l_{x_1, x_2}$ is the shortest path between the two points $x_1$ and $x_2$, and $l_{x_1, x_2}$ and $\gamma^*_{[s_1, s_2]}$ are not homotopic, which means that they cannot coincide, we have $\len(l_{x_1, x_2}) < \len(\gamma^*_{[s_1, s_2]})$. 
	Therefore, the path $\gamma'$ that is obtained by replacing $\gamma^*_{[s_1, s_2]}$ by $l_{x_1, x_2}$ in the path $\gamma^*$ is shorter than $\gamma^*$. However, this is a contradiction since $\gamma^*$ was assumed to be the shortest path between the two points.
\end{proof}

\begin{proof}[Proof of Proposition \ref{prop:local_homotopy_reachable_set}]
	In Lemma \ref{lemma:existence_shortest_path} we have established that, since $\mathcal{X}_\text{free}$ is path connected, there always exists a shortest path between any pair of points in $\mathcal{X}_\text{free}$, and therefore also between a point $x\in\mathcal{X}_\text{free}$ and $x_\text{a}$. 
	Thus, we obtain from Lemma \ref{lemma:local_homotopy} that between any pair of points in $\mathcal{X}_\text{free}$ there exists a tether configuration that is not entangled according to Definition \ref{def:local_visibility_homotopy}.
\end{proof}

\section{Proofs of the comparisons of the non-entanglement definitions (section VII)}
\label{appendix:proofs_comparison}

In this appendix the relationships between the non-entanglement definitions that were introduced in Section \ref{sec:relationships} are proved. 
The proofs are given in the form `Definition $d_1$ implies Definition $d_2$' which means `if a tether configuration $\gamma$ is not entangled according to Definition $d_1$, then it is also not entangled with respect to Definition $d_2$'.
The relationships proven in this sections are visualized in Figure \ref{fig:definitions_relationships_graph}. It is worth noting that only the relationships indicated by a black arrow are proven, while those indicated by the light-grey arrows, which correspond to relationships that can be proven by concatenating other relationships, are not.

\begin{figure*}[t]
	\centering
	\includegraphics[width=1\textwidth]{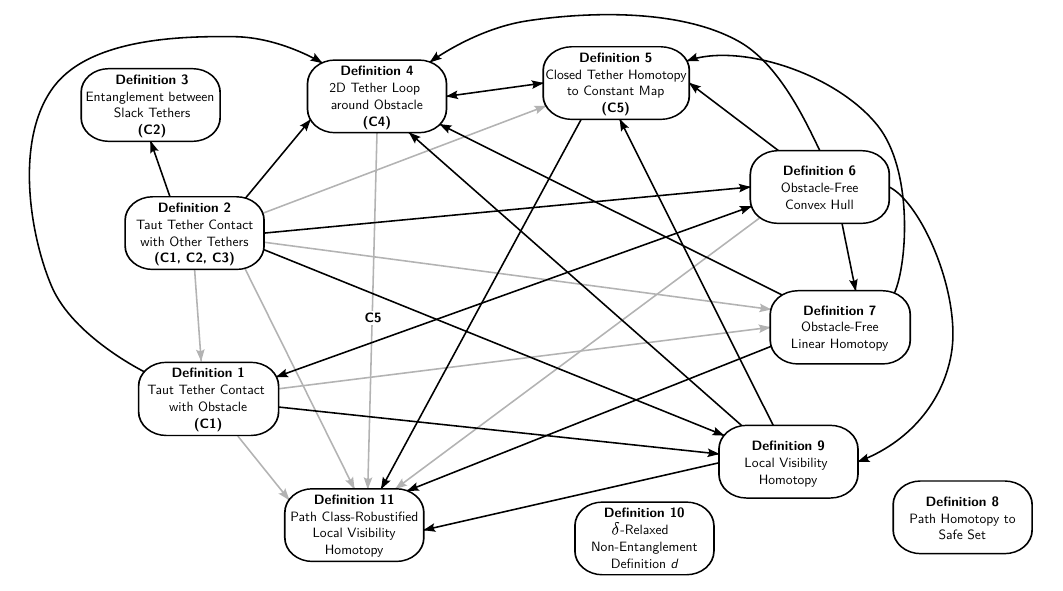}
	\caption{Graph showing the relationships between the non-entanglement definitions. Only the relationships in black are proven in this appendix. The ones in light gray can be derived by concatenation of other relationships. The relationships shown in this graph yield those reported in Table \ref{tab:definition comparison}.}
	\label{fig:definitions_relationships_graph}
\end{figure*}

\begin{proof}[Proof of Def. \ref{def:tether_contact_with_obstacle} $\implies$ Def. \ref{def:local_looping}]
	Let $\gamma$ be a taut tether in a 2D environment that is not entangled according to Definition \ref{def:tether_contact_with_obstacle}, i.e., such that $\gamma=l_{\x{a}, \x{r}}$.
	Since $\gamma$ coincides with a straight line segment, it does not contain any loop. This means that $\gamma(s_1) = \gamma(s_2) \iff s_1 = s_2$, and so the condition $\gamma_{[s_1, s_2]} \sim \gamma(s_1)$ of Definition \ref{def:local_looping} is always satisfied since $\gamma_{[s_1, s_2]} = \gamma(s_1)$. Thus $\gamma$ is not entangled according to Definition \ref{def:local_looping}.
\end{proof}

\begin{proof}[Proof of Def. \ref{def:tether_contact_with_obstacle} $\implies$ Def. \ref{def:obstacle_free_conv_hull}]
	Let $\gamma$ be a taut tether configuration that is not entangled according to Definition \ref{def:tether_contact_with_obstacle}, i.e., such that $\gamma=l_{\x{a}, \x{r}}$.
	By definition, the tether configuration $\gamma$ lies in $\mathcal{X}_\text{free}$, i.e., $\gamma\cap\interior\mathcal{O}=\emptyset$. 
	Since in the case of a straight line tether we have $\conv(l_{\x{a}, \x{r}}) = l_{\x{a}, \x{r}}$, it follows that $\conv(\gamma)\cap\interior\mathcal{O}=\emptyset$, which means that the tether is not entangled according to Definition \ref{def:obstacle_free_conv_hull}.
\end{proof}

\begin{proof}[Proof of Def. \ref{def:tether_contact_with_obstacle} $\implies$ Def. \ref{def:local_visibility_homotopy}]
	Let $\gamma$ be a taut tether configuration that is not entangled according to Definition \ref{def:tether_contact_with_obstacle}, i.e., such that $\gamma=l_{\x{a}, \x{r}}$. 
	For any $s_1, s_2 \in [0,1], s_1 \leq s_2$, we have $\gamma_{[s_1, s_2]} = l_{\gamma(s_1), \gamma(s_2)}$. It follows that $\gamma_{[s_1, s_2]} \sim l_{\gamma(s_1), \gamma(s_2)}$, which means that $\gamma$ is not entangled according to Definition~\ref{def:local_visibility_homotopy}.
\end{proof}

\begin{proof}[Proof of Def. \ref{def:taut_tether_contact_with_other_tether} $\implies$ Def. \ref{def:risk_of_entanglement}]
	Let $\mathcal{I}$ be a set of robots in a 3D environment, and $\gamma_i, i\in\mathcal{I}$ be a taut tether that is not entangled according to Definition \ref{def:taut_tether_contact_with_other_tether}, i.e., such that $\gamma_i = l_{x_{\text{a}, i}, x_{\text{r}, i}}$.
	Given the properties of the environment, the signature of each tether can be computed on a 2D projection of the environment, as detailed in \cite{cao2023neptune}.
	For every obstacle $O_i, i\in\{1, \ldots, m\}$ in the environment, a point $\hat{x}_i$ is then selected in its interior and two rays $\zeta_i = \hat{x}_i + sv$ and $\zeta_i = \hat{x}_i - sv$ are generated from it, where $v$ is a unit direction vector that is selected at the beginning of this process and used for every obstacle.
	The rays are added to the set of representative curves that will be used to compute the signature.
	Then, for every other robot $j\in\mathcal{I}\setminus\{i\}$, a piecewise-linear approximation of the tether is computed (see \cite[p.~2791]{cao2023neptune}). Each segment composing this approximation is added to the set of representative curves.
	Finally, the signature of $\gamma_i$ is computed.
	Since the path $\gamma_i$ coincides with the straight line segment $l_{x_{\text{a}, i}, x_{\text{r}, i}}$, it cannot intersect with any of the rays or straight line segments more than once. 
	Therefore $\gamma_i$ is not entangled according to Definition \ref{def:risk_of_entanglement}.
\end{proof}

\begin{proof}[Proof of Def. \ref{def:taut_tether_contact_with_other_tether} $\implies$ Def. \ref{def:local_looping}]
	Same proof as that of Def. \ref{def:tether_contact_with_obstacle} $\implies$ Def. \ref{def:local_looping} for $\gamma_i = l_{x_{\text{a},i},x_{\text{r}, i}}$.
\end{proof}

\begin{proof}[Proof of Def. \ref{def:taut_tether_contact_with_other_tether} $\implies$ Def. \ref{def:obstacle_free_conv_hull}]
	Same proof as that of Def. \ref{def:tether_contact_with_obstacle} $\implies$ Def.  \ref{def:obstacle_free_conv_hull} for $\gamma_i = l_{x_{\text{a},i},x_{\text{r}, i}}$.
\end{proof}

\begin{proof}[Proof of Def. \ref{def:taut_tether_contact_with_other_tether} $\implies$ Def. \ref{def:local_visibility_homotopy}]
	Same proof as that of Def. \ref{def:tether_contact_with_obstacle} $\implies$ Def.  \ref{def:local_visibility_homotopy} for $\gamma_i = l_{x_{\text{a},i},x_{\text{r}, i}}$.
\end{proof}

\begin{proof}[Proof of Def. \ref{def:local_looping} $\implies$ Def. \ref{def:closed_tether_homotopy_to_constant_map}]
	Let $\gamma$ be a closed tether configuration that is not entangled according to Definition \ref{def:local_looping}, i.e., such that $\gamma_{[s_1, s_2]}\sim \gamma(s_1), \forall s_1, s_2 \in [0,1]$ such that $\gamma(s_1)=\gamma(s_2)$.
	From this assumption and the fact that $\gamma$ is closed we have $\gamma_{[0, 1]} \sim \gamma(0)$, i.e., $\gamma \sim \x{a}$.
	Thus, $\gamma$ is not entangled according to Definition \ref{def:closed_tether_homotopy_to_constant_map}.
\end{proof}

\begin{proof}[Proof of Def. \ref{def:closed_tether_homotopy_to_constant_map} $\implies$ Def. \ref{def:local_looping}]
	Let $\gamma$ be a 2D closed tether configuration that is not entangled according to Definition \ref{def:closed_tether_homotopy_to_constant_map}, i.e., such that $\gamma \sim \x{a}$.
	For a 2D closed path $\gamma$ to be homotopic to a constant map there cannot be any obstacle being encircled by $\gamma$ \cite{bhattacharya2012topological}. Therefore, in any loop $\gamma_{[s_1, s_2]}$ such that $\gamma(s_1) = \gamma(s_2), s_1 \neq s_2$ there cannot be any obstacle as well. 
	This means that $\gamma_{[s_1, s_2]} \sim \gamma(s_1), \forall s_1, s_2\in[0,1]$, and therefore $\gamma$ is not entangled according to Definition \ref{def:local_looping}.
\end{proof}

\begin{proof}[Proof of Def. \ref{def:closed_tether_homotopy_to_constant_map} $\implies$ Def. \ref{def:path_class_relaxed_local_visibility_homotopy}]
	Let $\gamma$ be a closed path such that $\gamma(0) = \gamma(1) = x_\text{a}$ and $\gamma\sim x_\text{a}$. It is straightforward that the constant map $x_a$ satisfies the Local Visibility Homotopy definition (Definition \ref{def:local_visibility_homotopy}). Therefore, there exists a path that is not entangled according to the Local Visibility Homotopy definition and that is in the same path homotopy class as $\gamma$. Thus, $\gamma$ is not entangled according to Definition \ref{def:path_class_relaxed_local_visibility_homotopy}.
\end{proof}

\begin{proof}[Proof of Def. \ref{def:obstacle_free_conv_hull} $\implies$ Def. \ref{def:tether_contact_with_obstacle}]
	Let $\gamma$ be a taut path that is not entangled with respect to Definition \ref{def:obstacle_free_conv_hull}, i.e., such that $\conv(\gamma) \cap \interior\mathcal{O} = \emptyset$.
	It is easy to see that the taut path $\gamma$ must coincide with the straight-line path $l_{\x{a}, \x{r}}$. In fact, any other taut path must contain a bend around some obstacle, which violates the assumption that $\conv(\gamma)\cap\interior\mathcal{O}=\emptyset$. 
	Thus, $\gamma$ is also not entangled with respect to Definition \ref{def:tether_contact_with_obstacle}.
\end{proof}

\begin{proof}[Proof of Def. \ref{def:obstacle_free_conv_hull} $\implies$ Def. \ref{def:local_looping}]
	Let $\gamma$ be a tether configuration that is not entangled according to Definition \ref{def:obstacle_free_conv_hull}, i.e., such that $\conv(\gamma) \cap\interior\mathcal{O}=\emptyset$. If $\gamma(s_1) \neq \gamma(s_2), \forall s_1, s_2$, then $\gamma$ is trivially not entangled according to Definition \ref{def:local_looping}. 
	Otherwise, there exist some $s_1, s_2$ for which $\gamma(s_1)=\gamma(s_2)$. Since $\conv(\gamma_{[s_1, s_2]}) \subseteq \conv(\gamma)$, we have $\conv(\gamma_{[s_1, s_2]})\cap\interior\mathcal{O}=\emptyset, \forall s_1, s_2 \in [0,1]$.
	From Lemma \ref{lemma:path_homotopy_in_convex_set} applied with $\mathcal{Y} = \conv(\gamma_{[s1,s2]})$ it holds that $\gamma_{[s_1, s_2]} \sim \gamma(s_1)$, which means that $\gamma$ is not entangled according to Definition \ref{def:local_looping}.
\end{proof}

\begin{proof}[Proof of Def. \ref{def:obstacle_free_conv_hull} $\implies$ Def. \ref{def:closed_tether_homotopy_to_constant_map}]
	Let $\gamma$ be a closed tether configuration (i.e., such that $\gamma(0)=\gamma(1)$) that is not entangled according to Definition \ref{def:obstacle_free_conv_hull}, i.e., for which $\conv(\gamma) \cap\interior\mathcal{O}=\emptyset$.
	From Lemma \ref{lemma:path_homotopy_in_convex_set} applied with $\mathcal{Y} = \conv(\gamma)$ we have $\gamma \sim \gamma(0)$, thus $\gamma$ is not entangled according to Definition \ref{def:closed_tether_homotopy_to_constant_map}.
\end{proof}

\begin{proof}[Proof of Def. \ref{def:obstacle_free_conv_hull} $\implies$ Def. \ref{def:obstacle_free_linear_homotopy}]
	Let $\gamma$ be a tether configuration that is not entangled according to Definition \ref{def:obstacle_free_conv_hull}, i.e., for which $\conv(\gamma) \cap\interior\mathcal{O}=\emptyset$.
	The straight line segment $l_{\gamma(s), \x{a}}$ consists of all the convex combinations of the points $\gamma(s)$ and $\x{a}$, and therefore belongs to $\conv(\gamma)$. This holds for all the points of $\gamma$, i.e., $l_{\gamma(s), \x{a}} \subseteq \conv(\gamma), \forall s\in[0,1]$. Since $\conv(\gamma)\cap\interior\mathcal{O}=\emptyset$, then also the linear homotopy defined in \eqref{eq:linear_homotopy} has empty intersection with the interior of the obstacle region $\mathcal{O}$, i.e., $H(s,t)\in\mathcal{X}_\text{free}, \forall s,t\in[0,1]$.
	Thus, $\gamma$ is also not entangled with respect to Definition \ref{def:obstacle_free_linear_homotopy}.
\end{proof}

\begin{proof}[Proof of Def. \ref{def:obstacle_free_conv_hull} $\implies$ Def. \ref{def:local_visibility_homotopy}]
	Let $\gamma$ be a tether configuration that is not entangled according to Definition \ref{def:obstacle_free_conv_hull}, i.e., for which $\conv(\gamma) \cap\interior\mathcal{O}=\emptyset$.
	For every pair of scalars $s_1, s_2 \in [0,1], s_2\geq s_1$, the path $\gamma_{[s_1, s_2]}$ belongs to $\conv(\gamma)$ and so does the straight line segment $l_{\gamma(s_1), \gamma(s_2)}$. From Lemma \ref{lemma:path_homotopy_in_convex_set} we have $\gamma_{[s_1, s_2]} \sim l_{\gamma(s_1), \gamma(s_2)}$. Since this holds for all $s_1, s_2 \in [0,1]$, $\gamma$ is not entangled according to Definition \ref{def:local_visibility_homotopy}. 
\end{proof}

\begin{proof}[Proof of Def. \ref{def:obstacle_free_linear_homotopy} $\implies$ Def. \ref{def:local_looping}]
	Let $\gamma$ be a tether configuration that is not entangled according to Definition \ref{def:obstacle_free_linear_homotopy}, i.e., for which $l_{\gamma(s), \x{a}} \cap \interior\mathcal{O} = \emptyset, \forall s\in[0,1]$.
	Given any loop $\gamma_{[s_1, s_2]}$ in the tether $\gamma$ such that $\gamma(s_1)=\gamma(s_2)$, for $\gamma$ to be not entangled with respect to Definition \ref{def:obstacle_free_linear_homotopy} there cannot be any obstacle inside the area enclosed by $\gamma_{[s_1, s_2]}$, as otherwise there would be some point which violates the condition $l_{\gamma(s), \x{a}}\cap\interior\mathcal{O}=\emptyset$.
	Therefore, $\gamma_{[s_1, s_2]} \sim \gamma(s_1)$.
	Thus, $\gamma$ is not entangled according to Definition \ref{def:local_looping}.
\end{proof}

\begin{proof}[Proof of Def. \ref{def:obstacle_free_linear_homotopy} $\implies$ Def. \ref{def:closed_tether_homotopy_to_constant_map}]
	Let $\gamma$ be a tether configuration that is not entangled according to Definition \ref{def:obstacle_free_linear_homotopy}, i.e., for which $l_{\gamma(s), \x{a}} \cap \interior\mathcal{O} = \emptyset, \forall s\in[0,1]$.
	The existence of a linear homotopic mapping $H$ between $\gamma$ and $\x{a}$ directly implies that $\gamma$ is path-homotopic to the constant map $\x{a}$.
	Thus, $\gamma$ is not entangled according to Definition \ref{def:closed_tether_homotopy_to_constant_map}.
\end{proof}

\begin{proof}[Proof of Def. \ref{def:obstacle_free_linear_homotopy} $\implies$ Def. \ref{def:path_class_relaxed_local_visibility_homotopy}]
	Let $\gamma$ be a tether configuration that is not entangled according to Definition \ref{def:obstacle_free_linear_homotopy}, i.e., for which
	the linear homotopic map $H$ defined in \eqref{eq:linear_homotopy} has an empty intersection with the interior of the obstacle region. By definition of $H$, the concatenation $\gamma \diamond l_{x_\text{a}, x_\text{r}}^\text{reverse}$ is null-homotopic. Therefore, by Lemma \ref{lemma:null_homotopic_loops}, $\gamma\sim l_{x_\text{a}, x_\text{r}}$.
	The straight-line segment $l_{x_\text{a}, x_\text{r}}$ satisfies Definition \ref{def:local_visibility_homotopy}, as the straight-line segment between any two points of $l_{x_\text{a}, x_\text{r}}$ is path-homotopic to itself.
	This means that there exists a path in the same path class of $\gamma$ that is not entangled according to Definition \ref{def:local_visibility_homotopy}.
	Thus, $\gamma$ is not entangled according to Definition \ref{def:path_class_relaxed_local_visibility_homotopy}.
\end{proof}

\begin{proof}[Proof of Def. \ref{def:local_visibility_homotopy} $\implies$ Def. \ref{def:local_looping}]
	Let $\gamma$ be a 2D tether configuration that is not entangled according to Definition \ref{def:local_visibility_homotopy}.
	For any loop, i.e., for any path $\gamma_{[s_1, s_2]}$ such that $\gamma(s_1) = \gamma(s_2)$, it holds that $\gamma_{[s_1, s_2]} \sim \gamma(s_1)$, which is obtained by \eqref{eq:local_visibility_homotopy} with $l_{\gamma(s_1), \gamma(s_2)} = \gamma(s_1)$.
	Thus, $\gamma$ is not entangled according to Definition \ref{def:local_looping}.
\end{proof} 

\begin{proof}[Proof of Def. \ref{def:local_visibility_homotopy} $\implies$ Def. \ref{def:closed_tether_homotopy_to_constant_map}]
	Let $\gamma$ be a closed tether configuration that is not entangled according to Definition \ref{def:local_visibility_homotopy}.
	For $s_1=0, s_2=1$ we have $l_{\gamma(0), \gamma(1)} = \x{a}$. Since $\gamma$ is not entangled according to Definition \ref{def:local_visibility_homotopy} it holds that $\gamma_{[0, 1]} = \gamma \sim \x{a}$.
	Thus,  $\gamma$ is not entangled according to Definition \ref{def:closed_tether_homotopy_to_constant_map}.
\end{proof}

\begin{proof}[Proof of Def. \ref{def:local_visibility_homotopy} $\implies$ Def. \ref{def:path_class_relaxed_local_visibility_homotopy}]
	Let $\gamma$ be a tether configuration that is not entangled according to Definition \ref{def:local_visibility_homotopy}.
	Definition \ref{def:path_class_relaxed_local_visibility_homotopy} states that a tether configuration is not entangled if it is path-homotopic to another tether configuration that is not entangled according to Definition \ref{def:local_visibility_homotopy}. Since a path is always path-homotopic to itself, then $\gamma$ is also not entangled according to Definition \ref{def:path_class_relaxed_local_visibility_homotopy}.
\end{proof}

\bibliographystyle{IEEEtran}
\bibliography{references}


\begin{IEEEbiography}[
	{\includegraphics
		[width=1in, height=1.25in, clip, keepaspectratio]{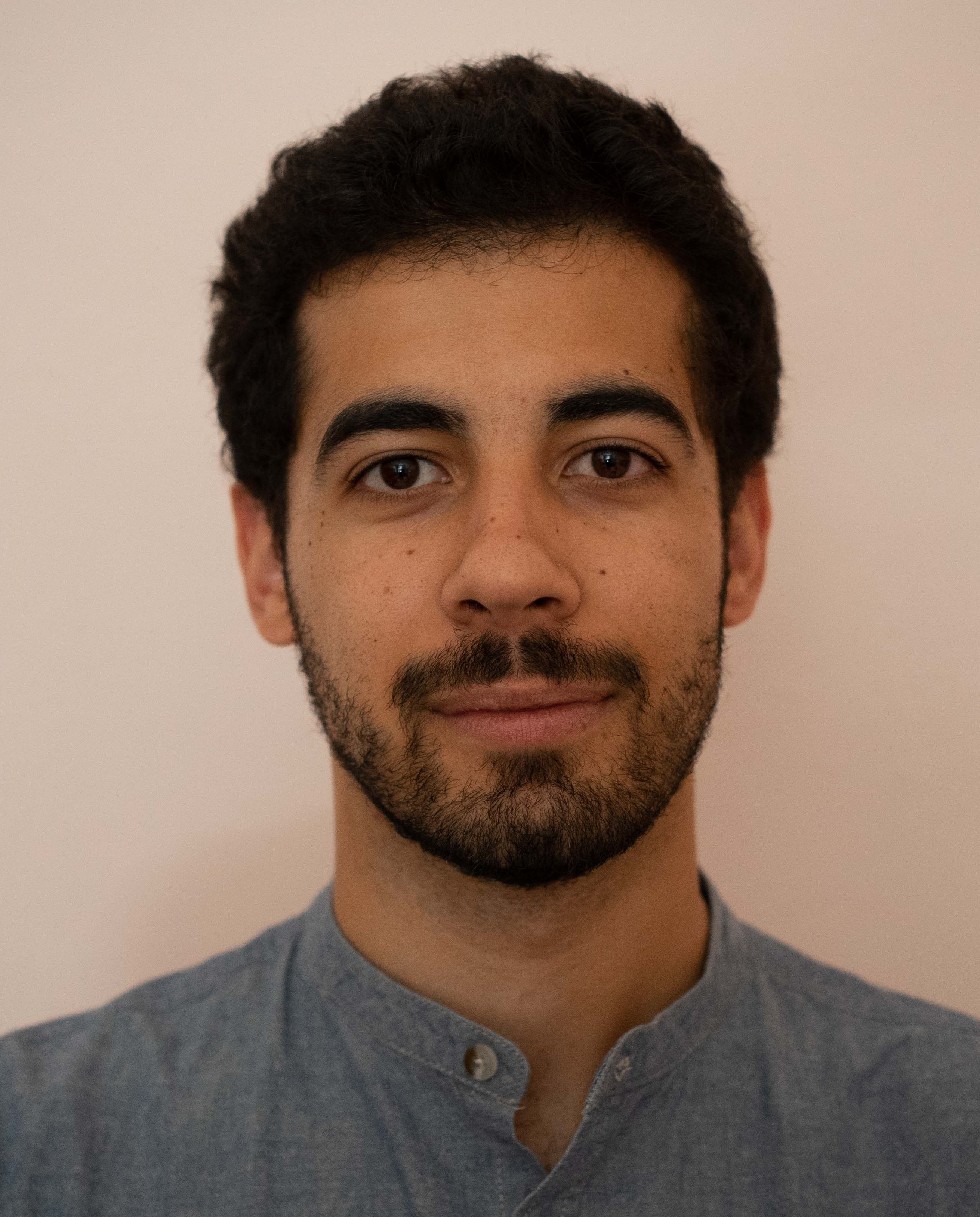}}
	]
	{Gianpietro Battocletti}
	received the B.Sc. degree from the University of Trento, Trento, Italy, in 2018, and the M.Sc. degree from Politecnico di Torino, Turin, Italy, in 2021.
	He is currently a Ph.D. candidate at the Delft Center for Systems and Control, Delft University of Technology, The Netherlands.
	His research interests include planning and control of mobile robots, with a particular focus on tethered robots, and cooperative control of multi-agent systems.
\end{IEEEbiography}

\begin{IEEEbiography}[
	{\includegraphics
		[width=1in, height=1.25in, clip, keepaspectratio]{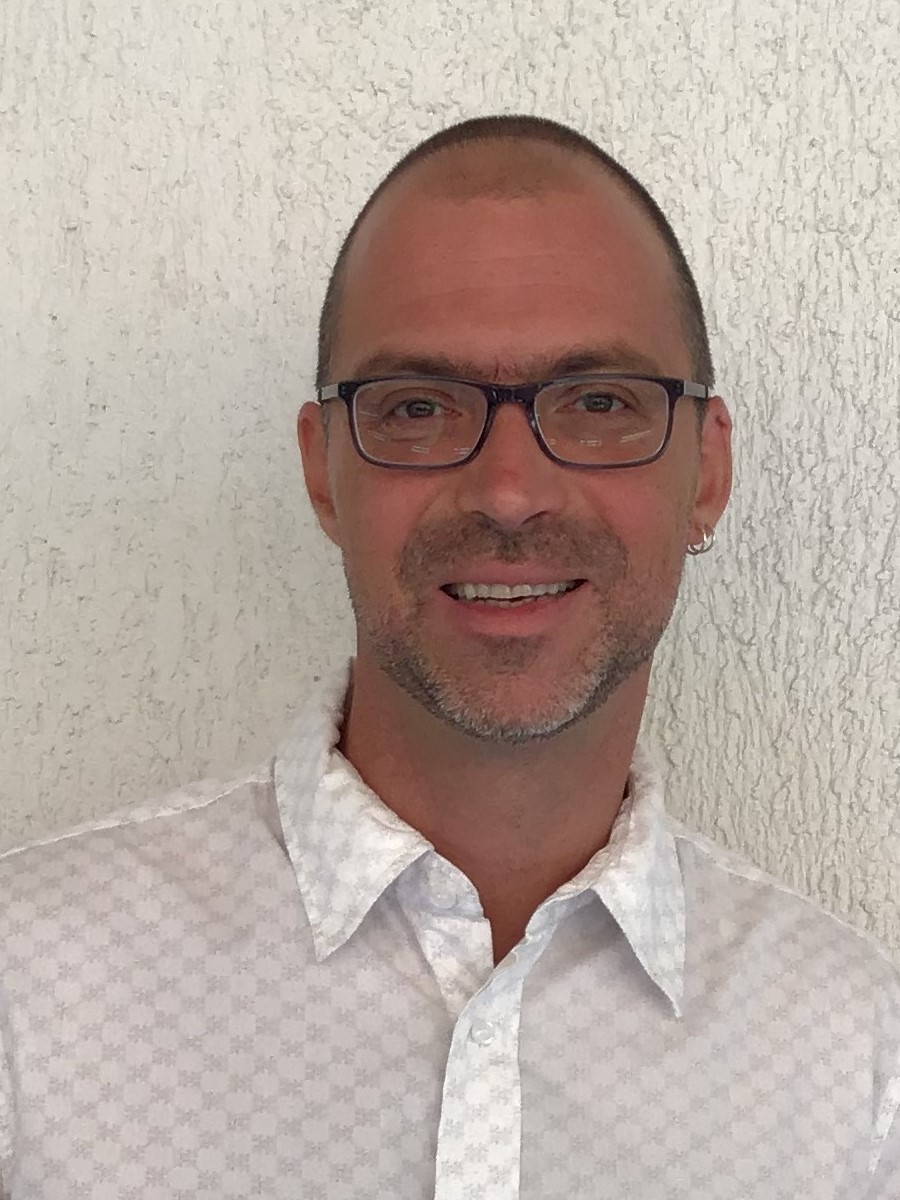}}
	]
	{Dimitris Boskos} 
	(M' 15) was born in Athens, Greece in 1981. He received a Diploma degree in Mechanical Engineering from the National Technical University of Athens (NTUA), Greece, in 2005, the M.Sc. degree in Applied Mathematics from the NTUA in 2008 and the Ph.D. degree in Applied Mathematics from the NTUA in 2014. Between August 2014 and August 2018, he has been a Postdoctoral Researcher with the Department of Automatic Control, School of Electrical Engineering, Royal Institute of Technology (KTH), Stockholm, Sweden. Between August 2018 and August 2020, he has been a Postdoctoral Researcher with the Department of Mechanical and Aerospace Engineering, University of California, San Diego, CA, USA. Since October 2020, he is an Assistant Professor with the Delft Center for Systems and Control, Delft University of Technology, Delft, The Netherlands. His research interests include distributionally robust optimization, distributed control of multi-agent systems, formal verification, and nonlinear observer design.
\end{IEEEbiography}

\begin{IEEEbiography}[
	{\includegraphics
		[width=1in, height=1in, clip, keepaspectratio]{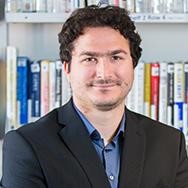}}
	]
	{Domagoj Toli\'{c}}
	is with RIT Croatia in Dubrovnik and the LARIAT – Laboratory for Intelligent Autonomous Systems. In 2012, he received a PhD in control systems from University of New Mexico, Albuquerque, NM. He was a postdoctoral researcher at University of Zagreb, Technical University of Munich and University of Dubrovnik. His professional interests are stability and estimation of nonlinear networked control systems with applications in robotics and multi-agent systems.
\end{IEEEbiography}

\begin{IEEEbiography}[
	{\includegraphics
		[width=1in, height=1in, clip, keepaspectratio]{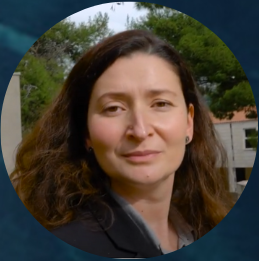}}
	]
	{Ivana Palunko} is an associate professor at the Department of Electrical Engineering and Computing, University of Dubrovnik. 
	She earned her Ph.D. in 2012 in Controls Systems from University of New Mexico, USA under supervision of prof. Rafael Fierro. 
	From 2012 until 2014 she was a postdoctoral researcher in the Research Centre for Advanced Cooperative Systems (ACROSS) at the Department of Control and Computer Engineering, University of Zagreb, Faculty of Electrical Engineering and Computing (UNIZG-FER). In 2013/2014 she was a visiting researcher at Information-oriented Control, TU M\"unchen, Germany under supervision of prof. Sandra Hirche. 
	In 2014 she joined the University of Dubrovnik as an assistant professor. Since 2019 she is the head of the Laboratory for Intelligent Autonomous Systems (LARIAT) at the University of Dubrovnik. 
	Her main research interests are in optimal adaptive control, machine learning and artificial intelligence with applications in robotics and multi-agent systems.
\end{IEEEbiography}

\begin{IEEEbiography}[
	{\includegraphics
		[width=1in, height=1.25in, clip, keepaspectratio]{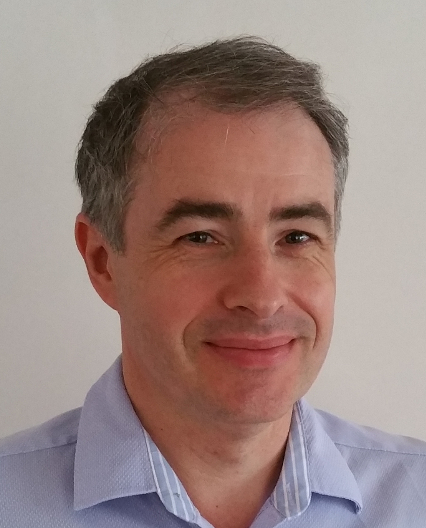}}
	]
	{Bart De Schutter} 
	(IEEE member since 2008, senior member since 2010, fellow since 2019) received the PhD degree in Applied Sciences (summa cum laude with congratulations of the examination jury) in 1996, at K.U.Leuven, Belgium. Currently, he is a full professor and head of department at the Delft Center for Systems and Control of Delft University of Technology in Delft, The Netherlands. 
	\par
	Bart De Schutter is senior editor of the \textsc{Ieee Transactions on Intelligent Transportation Systems}. 
	His current research interests include integrated learning- and optimization-based control, multi-level and multi-agent control of large-scale hybrid systems, and machine learning for decision making, with applications in intelligent transportation systems, smart energy systems, and underwater robotics.
\end{IEEEbiography}
\vfill

\EOD
\end{document}